\documentclass{article} 
\usepackage{iclr2027_conference_arxiv,times}

\usepackage{amsmath,amsfonts,bm}

\def\eqref#1{equation~\ref{#1}}

\def\1{\bm{1}}

\DeclareMathAlphabet{\mathsfit}{\encodingdefault}{\sfdefault}{m}{sl}
\SetMathAlphabet{\mathsfit}{bold}{\encodingdefault}{\sfdefault}{bx}{n}

\usepackage{url}
\usepackage{amsmath, amssymb, amsthm, mathtools, bm}
\usepackage{booktabs}
\usepackage{enumitem}
\usepackage{xcolor}
\usepackage{hyperref}
\usepackage{algorithm}
\usepackage{algpseudocode}
\usepackage{tikz}

\hypersetup{
    colorlinks=true,
    linkcolor=blue,
    citecolor=blue,
    urlcolor=blue
}
\usepackage{tabularx}

\newtheorem{proposition}{Proposition}
\newtheorem{theorem}{Theorem}

\newtheorem{assumption}{Assumption}
\newtheorem{corollary}{Corollary}
\newtheorem{lemma}{Lemma}

\usepackage{wrapfig}
\usepackage{xcolor}
\usepackage[most]{tcolorbox}

\newtcolorbox{takeawaybox}{
    colback=gray!10,
    colframe=gray!45,
    boxrule=0.6pt,
    arc=6pt,
    outer arc=6pt,
    left=7pt,
    right=7pt,
    top=6pt,
    bottom=6pt
}

\iclrfinalcopy

\title{Reliable Replay through Spatial Coherence in Online Continual Learning}

\iclrfinalcopy

\author{Haixiang Sun, Jiefu Zhang, Yinghao He, Yang Xu, Vaneet Aggarwal, Bharat Bhargava, Andrew L. Liu\\
Purdue University\\
West Lafayette, IN 47907, USA \\
\texttt{\{sun1321, zhan4018, he923, xu1720, vaneet, bbshail, andrewliu\}@purdue.edu} \\
}

\begin{document}

\maketitle

\begin{abstract}
Continually adapting models to new tasks requires retaining earlier knowledge under limited memory and computation. Experience replay addresses this challenge, but priorities based on individual loss increases overlook how related memories respond to the same update and can overemphasize isolated responses. We introduce \emph{SPatial coHErent risk control for REplay} (SPHERE), a general replay-allocation method applicable across a broad range of learning settings. SPHERE uses a representation kernel to aggregate signed prospective loss changes, attenuating unsupported spikes while retaining coherent increases. It then formulates allocation as entropy-regularized transport, redistributing uniform source mass toward supported high-risk regions while penalizing long-distance transfers. We derive replay coefficients from the transport objective's sensitivity to the original loss changes and blend them with uniform replay to maintain baseline rehearsal. Our analysis establishes conditions under which kernel aggregation improves risk estimation and bounds transport-value inflation due to residual noise and smoothing bias. Experiments demonstrate that SPHERE improves accuracy and reduces forgetting across noisy-label vision tasks, continual language-model instruction tuning, and code-generation reinforcement learning with incomplete test rewards. 
\end{abstract}

\section{Introduction}
\label{sec:introduction}


Online continual learning requires models to acquire new capabilities while retaining those learned earlier. The requirement spans supervised learning and language-model post-training, including instruction tuning and reinforcement learning for generation tasks. Although their training objectives differ, sequential updates can improve performance on current tasks while degrading earlier capabilities. Experience replay \citep{rolnick2019experience} addresses this tension by revisiting stored examples or problems alongside new data \citep{kirkpatrick2017overcoming,li2017learning,buzzega2020dark}. However, limited memory and computation allow only part of the history to be rehearsed at each update. Effective replay therefore depends on how rehearsal is allocated across past experience under the learning objective in use.



A common allocation signal is the loss increase that a stored example would experience after a virtual update on the current batch. The same construction can use a supervised loss or a policy-learning surrogate. However, treating each increase independently overlooks whether related memories are affected together. Interference-based retrieval methods use the loss increase to assess a memory's vulnerability and guide replay prioritization \citep{aljundi2019online,li2024adaer}. Such pointwise increments quantify how strongly individual memories respond, but not whether nearby memories respond similarly. Under unreliable supervision, the increase may also reflect conflict with an incorrect target rather than a decline in actual task performance. This raises a complementary question:

\emph{When does an observed loss increase justify higher replay priority?}

Our insight is to assess each loss increase using nearby memories' responses to the same update. When the underlying response is locally smooth, \emph{spatially coherent increases} support a shared disturbance, whereas an \emph{isolated spike} surrounded by unchanged or decreasing losses has weaker support. Moderate but mutually supported increases can therefore receive higher replay priority than a larger isolated response. Spatial information thus changes the risk criterion: an increment is assessed not only by its magnitude, but also by its agreement with neighboring responses.

\begin{figure}[t]
    \centering
    \includegraphics[width=0.85\linewidth]{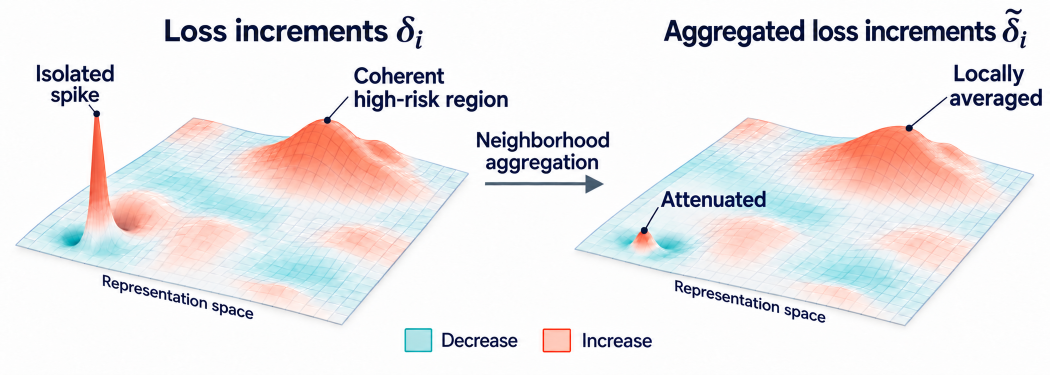}
    \caption{Isolated Spike and Coherent Forgetting.}
    \label{fig:neighborhood}
\end{figure}

Label noise is exactly one possible source of such isolated spike. An update may increase loss against a corrupted target while improving prediction of the true class. Prioritizing such increases may then reinforce incorrect supervision. Existing methods reduce the influence of noisy labels through purification and robust training \citep{kim2021continual,bang2022online,karim2022cnll,millunzi2024may}. However, if such corrupted examples remain in memory, their loss increases can be mistaken for forgetting that needs to be prevented. Extra replay can then reinforce incorrect targets and consume capacity needed to preserve useful knowledge.
A similar problem arises in reinforcement learning with imperfect rewards. In code generation, incomplete unit tests can reward incorrect programs \citep{le2022coderl}. If a later update reduces the likelihood of such programs, using replay to reverse that change could preserve faulty behavior. To address this problem, spatial aggregation can reduce the influence of isolated loss increases before replay allocation. It checks whether nearby memories show similar loss changes under the same update rather than directly identifying changes in clean-label prediction.


Building on this insight, we introduce \emph{SPatial coHErent risk control for REplay} (SPHERE), a general replay-allocation method. SPHERE aggregates loss changes within representation neighborhoods to attenuate unsupported spikes and retain coherent increases. Entropy-regularized transport then redistributes uniform mass toward supported high-risk regions while penalizing long-distance transfers. The resulting replay coefficients are blended with uniform replay to maintain baseline rehearsal. SPHERE is broadly applicable across tasks and training pipelines, without being tied to a particular model architecture or learning objective. Experiments show gains in accuracy and retention across noisy-label vision tasks, continual LLM instruction tuning, and code-generation reinforcement learning with incomplete test rewards. Our contributions are as three-fold:

\begin{itemize}[leftmargin=*,itemsep=1pt,topsep=2pt]
    \item We introduce SPHERE, a general replay-allocation method applicable across a broad range of learning settings. It combines neighborhood aggregation with entropy-regularized transport to direct replay toward supported high-risk regions while penalizing long-distance mass transfers.

    \item We derive replay coefficients from the transport objective's sensitivity to the original loss changes, linking risk estimation to replay allocation. We establish conditions under which neighborhood aggregation improves risk estimates and bound transport-value inflation due to residual noise and smoothing bias.

    \item We demonstrate gains in accuracy and retention across noisy-label vision tasks and continual LLM post-training, including code-generation reinforcement learning with incomplete test rewards. Component ablations support the roles of both neighborhood information and transport-based allocation.
\end{itemize}

\section{Related Works}

\paragraph{Online Continual Learning.}
Online continual learning (OCL) adapts to nonstationary streams in one pass with bounded memory, often without task boundaries. Task-free formulations and memory-based replay provide foundations \citep{aljundi2019task,rolnick2019experience,pmlr-v119-chrysakis20a,GDumb}. Work on memory selection and retrieval uses prospective interference, Shapley valuation, and memory editing \citep{aljundi2019online,shim2021online,jin2021gradient}. Other methods change replay representations and objectives through dark knowledge, contrastive learning, asymmetric classification, and evolving prototypes \citep{buzzega2020dark,mai2021supervised,caccia2022new,de2021continual}, as well as mutual information, complementary learning systems, and extended distillation \citep{pmlr-v162-guo22g,arani2022learning,boschini2022class}. OCL has also expanded to lifelong language learning and parameter-efficient continual instruction tuning \citep{de2019episodic,wang2023orthogonal}, while recent studies examine realistic online evaluation and generalization challenges \citep{ghunaim2023real,wang2024forgetting}. Building on prospective interference, SPHERE assesses whether nearby memories show similar signed loss changes under the same update before assigning extra replay priority.

\paragraph{Catastrophic Forgetting.}
Catastrophic interference is the loss of earlier competence after sequential updates \citep{MCCLOSKEY1989109,goodfellow2013empirical}. Regularization protects important parameters through Fisher or synaptic importance \citep{kirkpatrick2017overcoming,zenke2017continual,aljundi2018memory}, or preserves model functions through distillation and functional constraints \citep{li2017learning,pan2020continual}. Gradient-based methods use episodic constraints and meta-replay \citep{lopez2017gradient,chaudhry2018efficient,riemer2018learning}, or restrict updates to orthogonal subspaces \citep{farajtabar2020orthogonal,saha2021gradient}. Architectural methods isolate or expand capacity \citep{rusu2016progressive,serra2018overcoming,yoon2018lifelong}. Diagnostic studies connect forgetting to example-level events, training regimes, task semantics, and representation change \citep{toneva2018an,mirzadeh2020understanding,ramasesh2021anatomy}. Under noisy supervision, methods also adapt memory management or training to unreliable labels \citep{kim2021continual,bang2022online,karim2022cnll,millunzi2024may}. SPHERE focuses on whether a prospective loss increase has support from nearby memories, since an isolated increase may be a poor reason to allocate more replay.

\paragraph{Transport-Based Allocation.}
Classical optimal transport (OT) couples two prescribed marginals at minimum cost, and entropic regularization makes the coupling smooth and tractable \citep{cuturi2013sinkhorn,genevay2016stochastic,sun2026robust}. In continual learning, transport has been used to evolve replay memories and retain diverse stored examples \citep{wang2022improving,ye2025continual}, align event-class predictions with pretrained language-model information \citep{dao2024lifelong}, and transport classifiers between old and new classes \citep{zhou2021co}. These approaches leave open how to distinguish loss increases shared by neighboring memories from isolated responses when allocating replay. SPHERE addresses this problem by aggregating neighboring loss changes before transport allocation, directing replay toward supported forgetting risks while limiting the influence of isolated spikes.

\section{Replay-Priority Reliability}
\label{sec:noisy-risk-field}

Replay allocates limited training capacity to previously observed examples. The key decision is which memories need additional rehearsal under the current update. At step $t$, the learner receives a mini-batch $B_t$ and maintains a bounded memory $\mathcal M_t=\{z_i\}_{i=1}^{m}$ with $m\geq1$, where $z_i=(x_i,\widetilde y_i)$ records an input and its stored supervision. Let $L_t(\theta)$ be the current-batch loss, $\ell(z_i;\theta)$ the memory loss, and $\lambda\geq0$ the replay strength. This formulation accommodates different tasks and training pipelines, including supervised learning, contrasive loss in LLM, and reinforcement learning. Uniform experience replay (ER) minimizes
\begin{equation}
L_t(\theta)+\frac{\lambda}{m}\sum_{i=1}^{m}\ell(z_i;\theta).
\label{eq:er-objective}
\end{equation}
To assess which memories are vulnerable to a candidate update direction $g$, consider a virtual step of size $\eta>0$ and define
\begin{equation}
\delta_i=\ell(z_i;\theta_t-\eta g)-\ell(z_i;\theta_t).
\label{eq:loss-increment}
\end{equation}
A positive increment means that the update increases the loss on a stored example under its stored supervision. Ranking examples by this increment alone ignores how similar examples respond. It can therefore favor one isolated large increase over several moderate increases shared by neighboring memories. To account for this, we examine neighboring memories' loss changes under the same update. When nearby memories tend to respond similarly, their losses rising together provides stronger support for extra replay. If neighboring losses barely change or decrease, that support is weaker. We therefore assess each loss increase together with its neighbors before deciding how much replay it should receive.

Let $h_i=\phi(x_i)$ be an input-derived representation and let $d(h_i,h_j)$ measure separation in that representation space. Throughout, $d$ is finite, nonnegative, symmetric, and satisfies $d(h,h)=0$; the analysis does not require a triangle inequality. At a fixed scoring state and candidate direction, we model the signed increments as
\begin{equation}
\delta_i(g)=F_t(h_i;g)+\xi_i.
\label{eq:field-model}
\end{equation}
Here $F_i=F_t(h_i;g)$ is the conditional mean response at memory $i$, $F=(F_1,\ldots,F_m)$ collects these responses, and $\xi_i$ is the residual variation around that mean. Write $\mathbb E_t$ and $\mathbb P_t$ for expectation and probability conditional on the fixed scoring state and geometry, denoted by $\mathcal G_t$. Appendix~\ref{app:formal-conditioning} specifies this conditioning and gives a sufficient condition for the following spatial assumption.

\begin{assumption}[Spatial coherence]
\label{ass:spatial-coherence}
For some $L_F\geq0$, the conditional mean responses satisfy $|F_i-F_j|\leq L_Fd(h_i,h_j)$ for all memories $i,j$.
\end{assumption}
\begin{assumption}[Idiosyncratic score noise]
\label{ass:idiosyncratic-noise}
Conditional on $\mathcal G_t$, the residuals are independent, centered, and $\sigma$-sub-Gaussian: $\mathbb E_t[e^{a\xi_i}]\leq e^{\sigma^2a^2/2}$ for every $a\in\mathbb R$, with $\sigma>0$.
\end{assumption}
These assumptions express when neighboring scores provide useful corroboration: proximity limits variation in the mean response, and averaging can reduce residual variation that is not shared across memories. We then introduce the two kinds of forgetting as shown in Figure \ref{fig:neighborhood}.

\paragraph{Unsupported high-score spike.}
Suppose $\delta_i(g)$ is large and positive, while nearby memories have small or negative increments. Assumption~\ref{ass:spatial-coherence} limits how much their mean loss changes can differ. Any observed difference beyond this bound must come from residual variation. The large increment at $z_i$ therefore has little support from its neighbors that the whole region needs additional replay.

\paragraph{Spatially coherent high-risk region.}
Suppose instead that several nearby memories have positive increments of similar size. Their agreement supports a shared increase in the mean loss response. Under Assumption~\ref{ass:idiosyncratic-noise}, independent, zero-mean residuals become less likely to produce a large positive average as more memories contribute. Consistent increases across neighbors therefore provide stronger evidence for allocating additional replay to the region.

\section{SPHERE: Spatially coherent risk control for replay}
\label{sec:approach}
In this section, we introduce SPHERE, a general method for replay allocation compatible with different base models and training pipelines. It combines neighborhood aggregation with transport to compute replay weights, which are blended with uniform replay. SPHERE supports various settings, such as supervised learning, LLM instruction tuning, and reinforcement learning through their corresponding training losses.
\begin{figure}[h]
    \centering
    \includegraphics[width=1.03\linewidth]{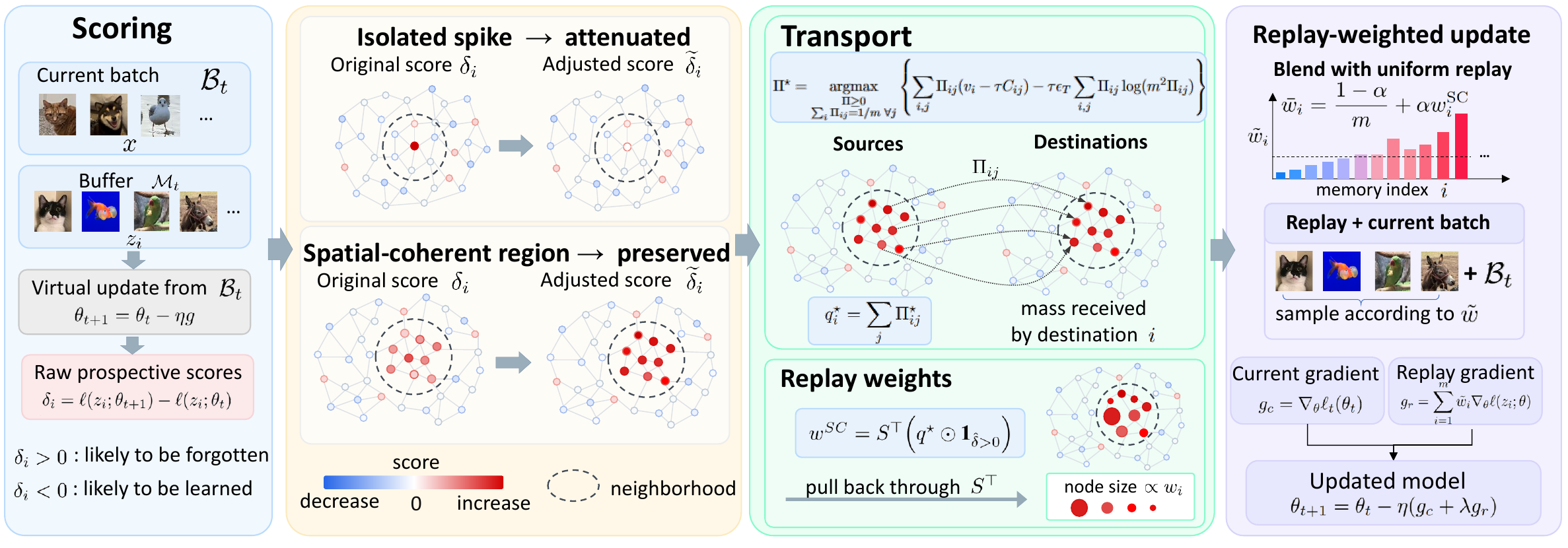}
    \caption{Overview of SPHERE.}
    \label{fig:pipeline}
\end{figure}
\subsection{Consensus corroborates prospective risk}
\label{subsec:geometry-estimation}

We use a representation kernel to determine how strongly neighboring memories contribute to each risk estimate. We score memories using the current-batch direction $g_c=\nabla_\theta L_t(\theta_t)$ and write $\delta_i=\delta_i(g_c)$ and $\delta=(\delta_1,\ldots,\delta_m)^\top$. Holding the representation geometry fixed during scoring, we define
\begin{equation}
    K_{ij}
    =
    \exp\!\left(-\frac{C_{ij}}{\epsilon_s}\right),
    \qquad
    S_{ij}
    =
    \frac{K_{ij}}{\sum_{k=1}^{m}K_{ik}},
    \qquad
    C_{ij}=d(h_i,h_j),
\end{equation}
where $\epsilon_s>0$ controls the spatial scale of aggregation. Nearby memories receive larger weights, while distant responses have less influence. Kernel weights depend on representation distance, whereas the learning objective enters through the loss changes. This separation provides a common aggregation rule across learning settings.

The row-stochastic matrix $S$ combines signed responses before retaining their positive part  $\widetilde{\delta}_i =  \sum_{j=1}^{m}S_{ij}\delta_j$, where $\widetilde{\delta}=S\delta$ and $\widetilde{s}=[\widetilde{\delta}]_+$. Thus, each risk estimate incorporates both the individual response and the evidence supplied by nearby memories. The kernel determines their relative influence. Let $n_i=\sum_{j\ne i}K_{ij}$ denote the neighboring kernel mass and $a_i=\sum_{j\ne i}K_{ij}\delta_j/n_i$ its weighted response when $n_i>0$. Since $K_{ii}=1$,
\begin{equation}
    \widetilde{\delta}_i
    =
    \frac{\delta_i+n_i a_i}{1+n_i}
    =
    a_i+\frac{\delta_i-a_i}{1+n_i}.
\end{equation}
For $n_i=0$, set $\widetilde{\delta}_i=\delta_i$. Small neighboring mass preserves more of the individual response, whereas larger mass contracts its deviation from neighboring evidence. If $\delta_i>0$ and $a_i\leq 0$, then $\widetilde{s}_i\leq\delta_i/(1+n_i)$; if $a_i=\delta_i>0$, the increase is preserved exactly.

The resulting estimator is useful when the reduction in residual variation outweighs the bias introduced by averaging responses across the neighborhood. Let $r_i=\sum_jS_{ij}d(h_i,h_j)$ be the local geometric radius. The following result makes the variance reduction and its geometric cost explicit.
\begin{proposition}[Geometry-controlled risk estimation]
\label{prop:kernel_estimator}
Under Assumptions~\ref{ass:spatial-coherence}--\ref{ass:idiosyncratic-noise},
\begin{equation}
\begin{aligned}
\big|\mathbb E_t[\widetilde\delta_i]-F_i\big|&\leq L_Fr_i,\\
\mathbb E_t[(\widetilde\delta_i-F_i)^2]&\leq L_F^2r_i^2+\sigma^2\sum_jS_{ij}^2,\\
\mathbb E_t[([\widetilde\delta_i]_+-[F_i]_+)^2]&\leq L_F^2r_i^2+\sigma^2\sum_jS_{ij}^2.
\end{aligned}
\label{eq:kernel-estimator-bounds}
\end{equation}
\end{proposition}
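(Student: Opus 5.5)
Since $S$ is row-stochastic and depends only on the geometry fixed by $\mathcal G_t$, we treat $S_{ij}$ and $r_i$ as deterministic under $\mathbb E_t$. The plan is to split $\widetilde\delta_i-F_i$ into a bias term and a noise term and bound each using one of the two assumptions. Using $\sum_j S_{ij}=1$ and the field model \eqref{eq:field-model},
\begin{equation*}
\widetilde\delta_i-F_i=\underbrace{\sum_j S_{ij}(F_j-F_i)}_{b_i}+\underbrace{\sum_j S_{ij}\xi_j}_{\zeta_i}.
\end{equation*}

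For the first bound, centering of the residuals (Assumption~\ref{ass:idiosyncratic-noise}) gives $\mathbb E_t[\zeta_i]=0$. Hence $\mathbb E_t[\widetilde\delta_i]-F_i=b_i$. Applying Assumption~\ref{ass:spatial-coherence} term by term, together with nonnegativity of the weights, gives
\begin{equation*}
|b_i|\le\sum_j S_{ij}L_F d(h_i,h_j)=L_F r_i .
\end{equation*}

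For the second bound, $b_i$ is deterministic and $\zeta_i$ is centered, so the cross term vanishes and
\begin{equation*}
\mathbb E_t[(\widetilde\delta_i-F_i)^2]=b_i^2+\mathbb E_t[\zeta_i^2].
\end{equation*}
By independence, $\mathbb E_t[\zeta_i^2]=\sum_j S_{ij}^2\mathbb E_t[\xi_j^2]$. The one genuinely non-formal step is the variance bound $\mathbb E_t[\xi_j^2]\le\sigma^2$ from the sub-Gaussian condition. To get it, expand $\mathbb E_t[e^{a\xi_j}]\le e^{\sigma^2a^2/2}$ to second order as $a\to0$: the zeroth and first-order terms are $1$ and $0$ on both sides, so comparing the $a^2/2$ coefficients gives $\mathbb E_t[\xi_j^2]\le\sigma^2$. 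Exchanging the limit and expectation is justified because finiteness of the moment generating function in a neighbourhood of zero yields all moments by dominated convergence. Combining this with $b_i^2\le L_F^2r_i^2$ gives the claim. I expect this variance step to be the only point needing care; the rest is bookkeeping.

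The third bound follows because $x\mapsto[x]_+$ is $1$-Lipschitz, so pointwise
\begin{equation*}
\big([\widetilde\delta_i]_+-[F_i]_+\big)^2\le(\widetilde\delta_i-F_i)^2 .
\end{equation*}
Taking $\mathbb E_t$ and applying the second bound finishes the proof.
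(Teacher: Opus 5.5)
Your proposal is correct and follows the paper's proof essentially step for step: the bias--noise decomposition, the Lipschitz bound on the bias, the vanishing cross term, and nonexpansiveness of the positive part. The only difference is in the variance step. The paper first shows, via the product of moment generating functions, that $\sum_j S_{ij}\xi_j$ is $\sigma\|S_{i\cdot}\|_2$-sub-Gaussian (a fact it reuses for Theorem~\ref{thm:pipeline-noise}) and then reads off the variance. You instead bound each $\mathbb E_t[\xi_j^2]\le\sigma^2$ directly and sum, and your second-order expansion makes explicit the sub-Gaussian-to-variance step that the paper leaves implicit.
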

The kernel controls both the geometric radius $r_i$ and the effective neighborhood size $N_i^{\mathrm{eff}}=\frac{1}{\sum_j S_{ij}^2}=\frac{(\sum_j K_{ij})^2}{\sum_j K_{ij}^2}$.
The estimation bound therefore takes the form
$L_F^2r_i^2+\sigma^2/N_i^{\mathrm{eff}}$.
Distributing kernel weight across more memories reduces the
residual term, but averaging across dissimilar responses can
increase the geometric bias. More neighbors help only if their latent responses remain sufficiently similar. Specifically, with $b_i=\sum_jS_{ij}(F_j-F_i)$ and $v_j=\mathbb E_t[\xi_j^2]$, consensus has smaller mean-squared error than the raw score exactly when $b_i^2<v_i-\sum_jS_{ij}^2v_j$. Thus, corroboration is useful when variance reduction exceeds smoothing bias, not merely when the buffer is large. Appendix~\ref{app:kernel-proofs} gives the proof and exact error decomposition. The same kernel also determines how the transport objective's sensitivity is assigned back to the memories contributing to each risk estimate, which we will discuss in next section.

\subsection{Transport maps supported risk to replay coefficients}
\label{subsec:transport-allocation}
\label{subsec:host-integration}
Signed consensus reduces isolated score deviations, but residual spikes can still dominate allocation based only on magnitude. We therefore redistribute uniform mass over the historical buffer with a distance penalty. A distant location receives more mass from a source than a nearby competitor only when its advantage in supported risk offsets the extra distance cost, limiting how residual outliers redirect allocation across the memory.

For input scores $v\in\mathbb R^m$, evaluated at $v=\widetilde s=[S\delta]_+$, each memory $j$ supplies mass $1/m$ and $\Pi_{ij}$ assigns mass to location $i$:
\begin{equation}
\mathcal V_{\mathrm{OT}}(v)
=\max_{\substack{\Pi\geq0\\\sum_i\Pi_{ij}=1/m\ \forall j}}
\left\{\sum_{i,j}\Pi_{ij}(v_i-\tau C_{ij})
-\tau\epsilon_T\sum_{i,j}\Pi_{ij}\log(m^2\Pi_{ij})\right\},
\label{eq:soft-transport-allocation}
\end{equation}
where $C_{ij}=d(h_i,h_j)$, $\tau>0$ scales the distance penalty, $\epsilon_T>0$ sets the relative entropy scale. The term \(v_i-\tau d(h_i,h_j)\) balances the neighborhood-supported risk \(v_i=[\sum_k S_{ik}\delta_k]_+\) at destination \(i\) against the cost of reallocating mass from historical memory \(j\). A distant outlier therefore receives more mass from a source than a nearby competitor only when its score advantage exceeds the additional distance cost, discouraging residual spikes from redirecting allocation across unrelated memories. Destination masses remain free to favor high-risk locations, while entropy smooths the allocation. The unique solution is
\begin{equation}
\Pi_{ij}^{*}(v)=\frac{1}{m}
\frac{\exp((v_i-\tau C_{ij})/(\tau\epsilon_T))}
{\sum_{k=1}^{m}\exp((v_k-\tau C_{kj})/(\tau\epsilon_T))}.
\label{eq:soft-transport-plan}
\end{equation}
Writing $q_i^*(v)=\sum_j\Pi_{ij}^*(v)$, we have $\nabla_v\mathcal V_{\mathrm{OT}}(v)=q^*(v)$. Differentiating through consensus and clipping gives the sensitivity to the raw increments,
\begin{equation}
w^{\mathrm{SC}}=S^\top\!\left(q^*(\widetilde s)\odot\mathbf1_{\{\widetilde\delta>0\}}\right).
\label{eq:transport-pullback}
\end{equation}
Here $\odot$ denotes elementwise multiplication. The $S^\top$
pullback assigns sensitivity at each positive aggregate to the memories that contributed to it. It follows from differentiating the neighborhood-based risk with respect to the original loss changes, rather than from an independently chosen weighting rule. Appendix~E specifies the boundary subgradient. Each memory receives sensitivity from the locations with positive consensus it contributes to; the boundary subgradient is specified in Appendix~\ref{app:transport-proofs}. Since $w^{\mathrm{SC}}\geq0$ and $\|w^{\mathrm{SC}}\|_1\leq1$, we preserve baseline rehearsal by blending with uniform replay for $0\leq\alpha<1$:
\begin{equation}
\bar w_j=\frac{1-\alpha}{m}+\alpha w_j^{\mathrm{SC}},\qquad
s_w=\sum_j\bar w_j\in[1-\alpha,1].
\label{eq:blended-weight-mass}
\end{equation}
With these coefficients held fixed, the uniform-ER host uses
\begin{equation}
\theta_{t+1}=\theta_t-\eta\left[g_c+\lambda\sum_{j=1}^{m}\bar w_j\nabla_\theta\ell(z_j;\theta_t)\right].
\label{eq:geometry-weighted-replay}
\end{equation}
Appendix~\ref{app:core-implementation} gives unbiased sampling and host interfaces. And the transport value inherits the estimation bound at fixed scoring geometry. For each different task, it is flexible to choose the corresponding loss function $\ell$.

\begin{theorem}[Noise control for the transport value]
\label{thm:pipeline-noise}
Under Assumptions \ref{ass:spatial-coherence}--\ref{ass:idiosyncratic-noise},
\begin{equation}
\mathbb E_t[\mathcal V_{\mathrm{OT}}(\widetilde s)]\leq\mathcal V_{\mathrm{OT}}([F]_+)+L_F\max_i r_i+\sigma\sqrt{2\log(2m)\max_i\sum_jS_{ij}^2}.
\label{eq:pipeline-noise-control}
\end{equation}
\end{theorem}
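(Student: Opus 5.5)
The plan is to show that $\mathcal V_{\mathrm{OT}}$ is $1$-Lipschitz with respect to the sup-norm, use this to bound the value gap by $\|\widetilde s-[F]_+\|_\infty$, and then split that sup-norm into a deterministic bias term and a maximal sub-Gaussian noise term.

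\textbf{Step 1 (Lipschitz property).} For any feasible $\Pi$, the objective in \eqref{eq:soft-transport-allocation} is affine in $v$ with coefficient vector $q_i=\sum_j\Pi_{ij}$. The column constraints give $q\geq0$ and $\sum_i q_i=1$. So $\mathcal V_{\mathrm{OT}}$ is a maximum of functions that are $1$-Lipschitz in $\|\cdot\|_\infty$, which gives
\[
\mathcal V_{\mathrm{OT}}(v)\leq\mathcal V_{\mathrm{OT}}(u)+\|v-u\|_\infty .
\]
Equivalently, $\nabla_v\mathcal V_{\mathrm{OT}}=q^*$ is a probability vector, and one can apply the mean value theorem.

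\textbf{Step 2 (remove the clipping).} Take $v=\widetilde s=[S\delta]_+$ and $u=[F]_+$. The positive part is $1$-Lipschitz, so
\[
\|\widetilde s-[F]_+\|_\infty\leq\|S\delta-F\|_\infty .
\]
Write $S\delta-F=b+S\xi$, where $b_i=\sum_jS_{ij}(F_j-F_i)$. This uses $S\delta=SF+S\xi$ together with row-stochasticity of $S$.

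\textbf{Step 3 (bias term).} By Assumption~\ref{ass:spatial-coherence},
\[
|b_i|\leq\sum_j S_{ij}L_F d(h_i,h_j)=L_F r_i\leq L_F\max_i r_i .
\]
This is the same bias bound that underlies Proposition~\ref{prop:kernel_estimator}.

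\textbf{Step 4 (noise term).} Taking expectations, it remains to bound $\mathbb E_t\max_i|(S\xi)_i|$. Here $S$ and the geometry are fixed under $\mathcal G_t$.

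By Assumption~\ref{ass:idiosyncratic-noise} (independence and centering), each $(S\xi)_i$ is sub-Gaussian with variance proxy $\sigma_i^2=\sigma^2\sum_jS_{ij}^2$. The same holds for $-(S\xi)_i$.

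This gives a family of $2m$ variables, each with proxy at most $\bar\sigma^2=\sigma^2\max_i\sum_jS_{ij}^2$. Apply the standard Jensen / log-moment-generating-function maximal inequality:
\[
\exp\!\big(a\,\mathbb E_t\max\big)\leq\sum_{k=1}^{2m}\mathbb E_t e^{aX_k}\leq 2m\,e^{a^2\bar\sigma^2/2},
\]
so that
\[
\mathbb E_t\max\leq\frac{\log(2m)}{a}+\frac{a\bar\sigma^2}{2}.
\]
Optimizing at $a=\sqrt{2\log(2m)}/\bar\sigma$ yields $\bar\sigma\sqrt{2\log(2m)}$, which matches the stated term exactly.

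\textbf{Main obstacle.} The only delicate point is making sure Step 1 genuinely gives sup-norm $1$-Lipschitzness. The entropy term does not depend on $v$, and the mass sums to one because there are $m$ sources each supplying $1/m$. Beyond that, the key is to keep $S$ deterministic under the conditioning, so that the sub-Gaussian proxy of $S\xi$ is valid.
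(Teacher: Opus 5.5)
Your proposal is correct and follows essentially the same route as the paper's proof. Both arguments combine sup-norm $1$-Lipschitzness of $\mathcal V_{\mathrm{OT}}$, nonexpansiveness of the positive part, the bias bound $\max_i|(SF-F)_i|\le L_F\max_i r_i$, and the MGF/Jensen maximal inequality over the $2m$ variables $\pm(S\xi)_i$, optimized at the same point. The only cosmetic difference is how the Lipschitz property is obtained: you read it off the variational form as a maximum of affine functions whose slopes are probability vectors, while the paper derives it from monotonicity and translation equivariance of the log-sum-exp expression.
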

The bound controls inflation of the scalar transport value through geometric bias and residual noise; systematic corruption can remain in $F$. Appendix~\ref{app:pipeline-proof} gives the proof, and Appendix~\ref{app:risk-descent-proof} establishes local descent for virtual-parameter replay under its stated conditions.

\section{Experiments}

\subsection{Continual language model post-training}
\label{sec:lnt}

We first evaluate whether SPHERE preserves earlier knowledge during continual language model post-training. The model sequentially learns text classification tasks covering sentiment analysis, topic classification, and natural language inference, while replaying stored examples to retain knowledge from earlier tasks. We use the 15 classification tasks of O-LoRA \citep{wang2023orthogonal}, expressed as instructions with verbalized labels and evaluated by exact match, without adding synthetic label corruption. T5-Large is adapted with LoRA of rank eight in a single pass of 860 updates with stream batches of 16.

\begin{table}[!ht]
\centering
\setlength{\abovecaptionskip}{2pt}
\setlength{\belowcaptionskip}{3pt}
\caption{Replay performance with limited memory and replay.
Values are mean$\pm$SD except Min.}
\label{tab:lnt-main}
\scriptsize
\setlength{\tabcolsep}{2.2pt}
\renewcommand{\arraystretch}{1.05}
\begin{tabular*}{\linewidth}{@{\extracolsep{\fill}}lcccccccc@{}}
\toprule
& \multicolumn{4}{c}{Buffer size 300}
& \multicolumn{4}{c}{Buffer size 3000} \\
\cmidrule(lr){2-5}\cmidrule(lr){6-9}
Method
& FP $\uparrow$ & Worst Forget $\downarrow$ & Tail Forget $\downarrow$ & Min. $\uparrow$
& FP $\uparrow$ & Worst Forget $\downarrow$ & Tail Forget  $\downarrow$ & Min. $\uparrow$ \\
\midrule
No replay
& .415$\pm$.146 & .811$\pm$.156 & .693$\pm$.173 & .130
& .415$\pm$.146 & .811$\pm$.156 & .693$\pm$.173 & .130 \\
ER
& .661$\pm$.075 & .245$\pm$.149 & .200$\pm$.137 & .423
& .641$\pm$.071 & .329$\pm$.143 & .269$\pm$.112 & .479 \\
Loss-prop
& .689$\pm$.045 & .188$\pm$.061 & \textbf{.142$\pm$.043} & .538
& .618$\pm$.050 & .325$\pm$.085 & .269$\pm$.076 & .502 \\
MIR-prop
& .675$\pm$.045 & .188$\pm$.083 & .152$\pm$.072 & .570
& .635$\pm$.042 & .296$\pm$.111 & .237$\pm$.079 & .561 \\
MIR-top
& .621$\pm$.023 & .244$\pm$.055 & .211$\pm$.060 & .588
& .633$\pm$.066 & .375$\pm$.205 & .285$\pm$.155 & .500 \\
SPHERE
& \textbf{.690$\pm$.021} & \textbf{.180$\pm$.053}
& .143$\pm$.035 & \textbf{.661}
& \textbf{.658$\pm$.028} & \textbf{.291$\pm$.093}
& \textbf{.228$\pm$.067} & \textbf{.619} \\
\bottomrule
\end{tabular*}
\end{table}

We compare SPHERE with no replay, ER, Loss-prop, MIR-prop, and MIR-top using buffers of 300 and 3000 and replay batches of 4. Replay methods use a 512-sample candidate pool refreshed from the buffer every four replay events. ER samples uniformly, Loss-prop samples proportionally to loss, and MIR-prop samples proportionally to positive one-step interference $[\delta_i]_+$.
MIR-top selects the top 16 interference scores, following the original MIR rule \citep{aljundi2019online}. Across both buffer sizes, Table~\ref{tab:lnt-main} shows that SPHERE performs higher FP, less forgetting  and lower observed variation than ER and the MIR variants. The results support using neighboring loss responses to guide replay allocation when only a small portion of past data can be rehearsed.

We next evaluate the neighborhood construction as a whole.
Setting $S=I$ removes signed aggregation and its corresponding $S^\top$ pullback while retaining transport with the original distance cost. In Table~\ref{tab:sphere-neighborhood-ablation}, full SPHERE improves final accuracy
by 4.42 percentage points and reduces worst forgetting by
7.90 points, while also improving new-task accuracy.
These gains support the joint use of neighborhood risk estimates
and their sensitivity-based replay coefficients.

\par\vspace{2pt}
\noindent\begin{minipage}[t]{0.405\linewidth}
\makeatletter\def\@captype{table}\makeatother
\scriptsize
\setlength{\abovecaptionskip}{0pt}
\setlength{\belowcaptionskip}{2pt}
\caption{Matched neighborhood-block ablation (\%).}
\label{tab:sphere-neighborhood-ablation}
\centering
\setlength{\tabcolsep}{1.8pt}
\renewcommand{\arraystretch}{1.03}
\begin{tabular*}{\linewidth}{@{\extracolsep{\fill}}lrrr@{}}
\toprule
Metric & $S=I$ & SPHERE & Gain \\
\midrule
Final acc. $\uparrow$ & 64.90 & \textbf{69.32} & +4.42 \\
Incremental acc. $\uparrow$ & 66.08 & \textbf{68.54} & +2.47 \\
New task acc. $\uparrow$ & 70.34 & \textbf{72.43} & +2.08 \\
Mean forgetting $\downarrow$ & 8.88 & \textbf{6.18} & +2.70 \\
Worst forgetting $\downarrow$ & 24.75 & \textbf{16.85} & +7.90 \\
Tail forgetting $\downarrow$ & 20.59 & \textbf{13.54} & +7.05 \\
\bottomrule
\end{tabular*}
\end{minipage}\hfill
\begin{minipage}[t]{0.575\linewidth}
\makeatletter\def\@captype{table}\makeatother
\scriptsize
\setlength{\abovecaptionskip}{0pt}
\setlength{\belowcaptionskip}{2pt}
\caption{Component controls with scarce memory and replay.
Values are mean$\pm$SD except Min.}
\label{tab:lnt-ablation}
\centering
\setlength{\tabcolsep}{1.3pt}
\renewcommand{\arraystretch}{1.03}
\begin{tabular*}{\linewidth}{@{\extracolsep{\fill}}lcccc@{}}
\toprule
Variant & FP $\uparrow$ & Worst $\downarrow$
& Tail $\downarrow$ & Min. $\uparrow$ \\
\midrule
Full SPHERE
& \textbf{.690$\pm$.021} & \textbf{.180$\pm$.053}
& \textbf{.143$\pm$.035} & \textbf{.661} \\
Consensus only
& .680$\pm$.021 & .189$\pm$.054 & .154$\pm$.030 & .647 \\
No transport
& .640$\pm$.063 & .275$\pm$.129 & .227$\pm$.110 & .478 \\
Permuted geometry
& .666$\pm$.056 & .213$\pm$.082 & .188$\pm$.078 & .497 \\
Matched MIR
& .682$\pm$.048 & .192$\pm$.113 & .161$\pm$.087 & .517 \\
No uniform floor
& .673$\pm$.034 & .230$\pm$.082 & .183$\pm$.063 & .608 \\
\bottomrule
\end{tabular*}
\end{minipage}
\par\vspace{2pt}
Table \ref{tab:lnt-ablation} examines allocation after neighborhood aggregation. Consensus only replaces transport with an ESS-matched softmax of the same supported scores, retaining the positive-response mask, $S^\top$ pullback, and uniform mixture. Full SPHERE achieves 0.690 final performance versus 0.680, with lower worst and tail forgetting. The lower reported mean with permuted geometry further supports the importance of alignment between memories and their geometry. All instruction-tuning controls use unit-mass replay, so these gains do not require explicit replay-strength scaling.

\subsection{Noisy-label continual learning}

\label{sec:cifar}

Corrupted labels make an individual loss increase ambiguous, since rehearsing an incorrect target can reduce accuracy on clean test data. We next test the performace of SPHERE on conventional classfication task to test whether neighborhood evidence improves performance across different training objectives. We integrate SPHERE with ER, ER-ACE~\citep{caccia2022new}, and X-DER~\citep{boschini2022class}. Each integration retains its host's losses and memory policy. MIR-prop~\citep{aljundi2019online} and DER++~\citep{buzzega2020dark} provide additional replay baselines. We also evaluate SPHERE within PuriDivER's purity estimation and sample-partition rules~\citep{bang2022online}. PuriDivER, GDumb~\citep{GDumb}, and ABS-based~\citep{millunzi2024may} use additional memory fitting and are reported separately. Split CIFAR-100 and TinyImageNet each contain ten tasks, with memory capacities of 2,000 and 4,000 examples, respectively. Online training uses a single pass with 0, 20, 40, or 60\% symmetric label noise. The main evaluation measures classification accuracy on clean test data without task identity. Experiment details and additional results on MNIST and CIFAR-10 appear in Appendix \ref{app:experiments}.
\begin{table}[h]
\centering
\caption{Replay across hosts under symmetric label noise.}
\label{tab:cifar-main}\label{tab:c100-main}\label{tab:tiny-main}
\scriptsize
\setlength{\tabcolsep}{1.5pt}
\renewcommand{\arraystretch}{1.07}
\begin{tabular*}{\linewidth}{@{\extracolsep{\fill}}lcccccccc@{}}
\toprule
& \multicolumn{4}{c}{CIFAR-100 ($K=2{,}000$)} & \multicolumn{4}{c}{TinyImageNet ($K=4{,}000$)}\\
\cmidrule(lr){2-5}\cmidrule(l){6-9}
Method & Clean & 20\% & 40\% & 60\% & Clean & 20\% & 40\% & 60\%\\
\midrule
ER & 19.6$\pm$3.4 & 14.3$\pm$1.1 & 10.2$\pm$1.1 & 5.1$\pm$0.8 & 16.3$\pm$1.2 & 12.0$\pm$0.6 & 7.4$\pm$0.6 & 3.3$\pm$0.4\\
ER + SPHERE & \textbf{20.9$\pm$1.8} & 15.7$\pm$1.2 & 12.2$\pm$0.8 & 8.0$\pm$0.7 & 15.3$\pm$1.4 & 13.5$\pm$0.9 & 10.1$\pm$0.6 & 6.1$\pm$0.7\\
MIR-prop & 19.8$\pm$1.4 & 15.0$\pm$2.3 & 10.2$\pm$1.6 & 6.2$\pm$0.8 & 17.1$\pm$0.5 & 11.6$\pm$1.1 & 8.0$\pm$0.8 & 4.7$\pm$0.2\\
DER++ & 15.3$\pm$4.1 & 10.9$\pm$4.1 & 9.0$\pm$2.0 & 5.6$\pm$1.0 & 9.8$\pm$1.0 & 7.4$\pm$0.9 & 4.8$\pm$0.7 & 3.4$\pm$0.3\\
ER-ACE & 20.4$\pm$3.4 & 16.8$\pm$2.1 & 12.9$\pm$1.5 & 8.1$\pm$1.1 & 17.9$\pm$1.7 & 14.4$\pm$0.8 & 9.6$\pm$0.8 & 5.5$\pm$0.4\\
ER-ACE + SPHERE & 18.7$\pm$2.9 & 16.6$\pm$1.6 & 13.5$\pm$1.6 & 10.4$\pm$0.6 & 16.6$\pm$1.4 & 14.2$\pm$0.6 & 11.0$\pm$1.1 & 6.8$\pm$0.6\\
X-DER & 20.2$\pm$5.0 & 16.2$\pm$6.1 & 14.1$\pm$1.8 & 8.3$\pm$2.4 & 17.9$\pm$1.6 & 17.5$\pm$1.0 & 10.4$\pm$1.2 & 6.2$\pm$1.6\\
X-DER + SPHERE & 20.7$\pm$4.6 & \textbf{19.6$\pm$3.0} & \textbf{16.1$\pm$1.6} & \textbf{10.9$\pm$1.1} & \textbf{19.4$\pm$1.1} & \textbf{18.2$\pm$0.8} & \textbf{12.8$\pm$1.1} & \textbf{8.2$\pm$0.4}\\
\midrule
\multicolumn{9}{l}{\emph{Separate buffer training}}\\
GDumb & \textbf{25.2$\pm$1.0} & 17.7$\pm$0.9 & 12.5$\pm$0.6 & 7.4$\pm$0.5 & \textbf{19.4$\pm$0.3} & 13.0$\pm$0.5 & 7.7$\pm$0.7 & 4.3$\pm$0.2\\
ABS-based & -- & 32.9$\pm$0.7 & 32.6$\pm$0.8 & 24.9$\pm$0.9 & -- & 26.8$\pm$0.8 & 23.4$\pm$0.8 & 16.2$\pm$0.3\\
PuriDivER & -- & 38.6$\pm$0.7 & 37.2$\pm$1.5 & 32.2$\pm$0.6 & -- & 30.1$\pm$0.3 & 28.7$\pm$0.3 & 24.6$\pm$0.2\\
PuriDivER + SPHERE & -- & \textbf{39.2$\pm$2.0} & \textbf{39.1$\pm$1.1} & \textbf{36.2$\pm$1.5} & -- & \textbf{30.8$\pm$0.1} & \textbf{31.9$\pm$1.4} & \textbf{27.4$\pm$1.3}\\
\bottomrule
\end{tabular*}
\end{table}

Table~\ref{tab:cifar-main} shows that SPHERE improves all three replay hosts on both datasets under moderate and high label noise. The gains also extend to PuriDivER, which uses purity estimates to partition stored examples and guide training under corrupted supervision. The SPHERE integrations improve the evaluated replay hosts under moderate and high label noise. The improvements across hosts support its broader use in replay allocation, while gains under cleaner supervision depend on the host. We next examine whether the gains extend beyond final classification accuracy. Class-IL and Task-IL measure accuracy without and with task identity, respectively, while AAA summarizes accuracy throughout training. 
\begin{wraptable}{r}{0.69\linewidth}
\centering
\vspace{-5pt}
\caption{CIFAR-10 at 20\% noise, $K=500$ (mean$\pm$SD).}
\label{tab:vision-retention-main}
\scriptsize
\setlength{\tabcolsep}{2.0pt}
\renewcommand{\arraystretch}{1.10}
\begin{tabular}{lcccccc}\toprule
Method & Class-IL$\uparrow$ & Task-IL$\uparrow$ & AAA$\uparrow$ & avgF$\downarrow$ & worst-class$\uparrow$ & CVaR$_{0.2}\downarrow$ \\ \midrule
ER & 33.4$\pm$4.5 & 76.1$\pm$5.1 & 51.4$\pm$4.8 & 42.9$\pm$5.6 & 5.6$\pm$4.5 & 72.5$\pm$5.7 \\
ER-ACE & 36.1$\pm$4.5 & 77.9$\pm$6.5 & 53.2$\pm$2.1 & 25.3$\pm$5.5 & 11.1$\pm$7.2 & 54.8$\pm$8.9 \\
GDumb & 36.8$\pm$0.9 & 77.0$\pm$2.8 & 28.1$\pm$0.7 & 20.2$\pm$7.4 & 8.4$\pm$5.4 & 51.2$\pm$19.0 \\
DER++ & 28.2$\pm$4.3 & 80.7$\pm$4.8 & 48.4$\pm$4.7 & 57.5$\pm$9.3 & 1.3$\pm$1.7 & 81.3$\pm$7.0 \\
MIR & 32.1$\pm$4.4 & 75.6$\pm$4.1 & 48.5$\pm$3.5 & 36.9$\pm$7.0 & 5.5$\pm$3.6 & 71.1$\pm$6.9 \\
X-DER & 28.1$\pm$8.2 & 75.1$\pm$8.0 & 44.9$\pm$5.3 & 25.0$\pm$9.9 & 0.0$\pm$0.0 & 60.0$\pm$16.1 \\
\textbf{ER-ACE + SPHERE} & \textbf{38.8$\pm$4.2} & \textbf{81.0$\pm$4.5} & \textbf{54.6$\pm$3.1} & \textbf{15.3$\pm$6.7} & \textbf{19.2$\pm$2.4} & \textbf{46.5$\pm$10.8} \\
\bottomrule\end{tabular}\vspace{-5pt}
\end{wraptable}
We also
report average forgetting (avgF), worst-class accuracy, and tail forgetting
($\mathrm{CVaR}_{0.2}$) to assess retention beyond the average prediction.
Table~\ref{tab:vision-retention-main} shows that adding SPHERE to ER-ACE
improves every reported metric. The gains in worst-class accuracy and tail
forgetting support the use of neighborhood information to protect knowledge
that is most affected by subsequent updates.

\begin{wraptable}{r}{0.48\linewidth}\vspace{-20pt}
\centering
\caption{Neighborhood-block ablation on clean Split CIFAR-10.}
\label{tab:cifar-neighborhood-ablation}
\small
\setlength{\tabcolsep}{4pt}
\renewcommand{\arraystretch}{1.12}
\begin{tabular}{@{}lcc@{}}
\toprule
Metric & SPHERE & $S=I$ \\
\midrule
Final acc $\uparrow$
 & {\bf 46.61 $\pm$ 2.19} & 45.42$\pm$2.99 \\
Stage-average acc $\uparrow$
 & {\bf 59.01 $\pm$ 3.00} & 58.93$\pm$3.30 \\
\midrule
Mean forgetting $\downarrow$
 & {\bf 35.10 $\pm$ 5.90} & 36.55$\pm$7.71 \\
Worst forgetting $\downarrow$
 & {\bf 62.66 $\pm$ 9.63} & 69.85$\pm$12.42 \\
Top-2 forgetting $\downarrow$
 & {\bf 59.53 $\pm$ 8.81} & 61.84$\pm$10.69  \\
\bottomrule
\end{tabular}
\end{wraptable}

Finally, we examine whether neighborhood information remains useful when
labels are clean. On clean Split CIFAR-10, we set $S=I$ and jointly remove
neighborhood averaging and its associated weight pullback while retaining
the other components. Table~\ref{tab:cifar-neighborhood-ablation} shows that
full SPHERE reduces mean and severe forgetting, alongside higher final
accuracy and similar stage-average accuracy. The results support neighborhood
information as a means of preserving earlier knowledge even without label
corruption.

\subsection{Code generation with incomplete test rewards}

Beyond classification, we further implement whether SPHERE can preserve earlier skills during continual reinforcement learning for code generation. For a problem $x$, the policy $\pi_\theta$ generates a program $y$ token by token and receives the terminal reward
$r^{\mathrm{vis}}(x,y)=\mathbf{1}\{y\text{ passes the visible test for }x\}$.
A rewarded program can still fail other tests, so replay must allocate
limited capacity using loss changes derived from incomplete feedback.
The expected reward and its policy gradient are
\begin{equation}
\begin{aligned}
J_t(\theta)
&=\mathbb{E}_{x\sim\mathcal D_t,\,
                 y\sim\pi_\theta(\cdot\mid x)}
  [r^{\mathrm{vis}}(x,y)],\\
\nabla_\theta J_t(\theta)
&=\mathbb{E}\!\left[
  \bigl(r^{\mathrm{vis}}(x,y)-b_t(x)\bigr)
  \sum_{u=1}^{|y|}
  \nabla_\theta\log\pi_\theta(y_u\mid x,y_{<u})
  \right],
\end{aligned}
\label{eq:code-policy-gradient}
\end{equation}
where $\mathcal D_t$ is the current problem distribution and $b_t(x)$
is a problem-dependent baseline.

We construct four successive stages from KodCode-V1
\citep{xu2025kodcode}, covering Package, Data Structure/Algorithm,
Leetcode, and Codeforces/Apps. After removing near duplicates, each
stage contains 120 training problems and 100 disjoint evaluation
problems. We adapt Qwen2.5-Coder-1.5B-Instruct \citep{hui2024qwen2}
with rank-16 LoRA. Two passes over each stage give 30 updates per
stage and 120 in total, with eight current problems per update.
Evaluation uses greedy decoding and requires passing the full test
suite, which is unavailable to training and replay selection.

We optimize the policy with a GRPO-style clipped loss
\citep{shao2024deepseekmath,liu2025understanding}. For each problem
$x_i$, we sample a group $\mathcal G_i$ of eight programs and compute
$A_{ik}=r_{ik}-\frac{1}{8}\sum_{\ell=1}^{8}r_{i\ell}$, where
$r_{ik}=r^{\mathrm{vis}}(x_i,y_{ik})$. The token-level surrogate
weights policy probability ratios by these centered advantages, with
clipping to limit the incentive for large probability changes.
Let $\mathcal L_i^{\mathrm{GRPO}}(\theta;\mathcal G_i)$ denote the
resulting group loss. Only groups containing both reward outcomes
are scored, since all advantages vanish otherwise. SPHERE assesses whether an update on current problems would increase
the policy loss on an earlier problem. We take a AdamW step to obtain $\theta_t^{\mathrm v}$ and compute $\delta_i=
\mathcal L_i^{\mathrm{GRPO}}(\theta_t^{\mathrm v};\mathcal G_i)
-\mathcal L_i^{\mathrm{GRPO}}(\theta_t;\mathcal G_i)$, holding the sampled programs and advantages fixed across both evaluations. SPHERE combines these changes across nearby prompt embeddings before allocating replay weights. All replay methods store up to 480 problems, assess 48 candidates per update, and replay eight groups with unit total weight. Hyperparameters are selected on disjoint development data and seeds.

\begin{wraptable}{r}{0.59\linewidth}
\centering
\vspace{-15pt}
\caption{Code-generation results. }
\label{tab:code-main}
\renewcommand{\arraystretch}{1.02}
\scriptsize
\begin{tabular}{@{\extracolsep{\fill}}lrrrrrr@{}}
\toprule
Method & Final $\uparrow$ & Old $\uparrow$ & Worst $\uparrow$
& $F\downarrow$ & BWT $\uparrow$  \\
\midrule
Uniform replay & 42.75$\pm$2.92 & 43.56 & 40.00 & 4.00 & -2.78 \\
MIR & 44.50$\pm$1.63 & 45.78 & 38.67 & 4.56 & -3.89  \\
No Transport & 42.67$\pm$1.70 & 41.78 & 39.67 & 6.56 & -5.56  \\
Permuted geometry & 48.58$\pm$1.88 & 45.78 & 41.67 & 3.56 & -2.89 &  \\
SPHERE & \textbf{49.08$\pm$0.80} & \textbf{46.89} & \textbf{43.67}
& \textbf{3.33} & \textbf{-1.56}  \\
\bottomrule
\end{tabular}\vspace{-10pt}
\end{wraptable}

Table~\ref{tab:code-main} shows SPHERE performs stronger final accuracy and retention  than for uniform replay and interference-based retrieval, alongside positive acquisition gains. The results suggest that the replay controller can help balance retention and adaptation when policy updates rely on incomplete rewards.
Figure~\ref{fig:code-training} complements the final comparison by tracking visible-test reward during training and full-suite accuracy after each stage. Together, the panels relate progress under the training reward to performance under broader correctness checks as the task distribution changes. 

\begin{figure}[h]
    \centering
    \includegraphics[width=0.95\linewidth]{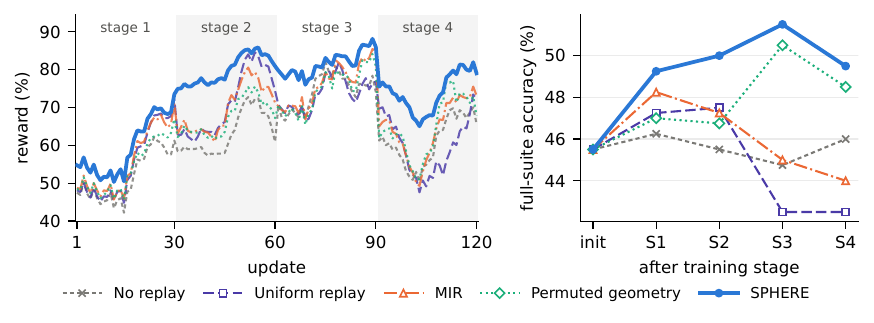}\vspace{-10pt}
    \caption{Reward during training (left) and evaluation accuracy after each stage (right).}
    \label{fig:code-training}\vspace{-10pt}
\end{figure}

\section{Conclusion}
We introduced SPHERE, a general replay-allocation method applicable across a broad range of learning settings. SPHERE combines kernel-based aggregation of signed loss changes with entropy-regularized transport, directing replay toward supported high-risk regions while penalizing long-distance mass transfers. Replay coefficients are derived from the transport objective's sensitivity to the original loss changes and blended with uniform replay to retain baseline rehearsal. Our analysis establishes when neighborhood aggregation improves risk estimation and bounds transport-value inflation due to residual noise and smoothing bias. Experiments demonstrate improved accuracy and retention across noisy-label vision tasks, continual language-model instruction tuning, and code-generation reinforcement learning with incomplete test rewards. However, computing pairwise distances and transport allocations adds overhead as the scored memory set grows, and our future work will explore sparse approximations to reduce this cost in large-scale continual training.

\bibliography{iclr2027_conference}
\bibliographystyle{iclr2027_conference} 
\clearpage
\appendix

\section{Core implementation interface and limits of the guarantees}
\label{app:core-implementation}
\paragraph{A fully specified ideal update.}
Algorithm~\ref{alg:core-controller} implements the mathematical controller on a fixed scored set. 

\begin{algorithm}[ht]
\caption{Core SPHERE on a fixed scored set}
\label{alg:core-controller}
\begin{algorithmic}[1]
\Require Model $\theta_t$, current batch $B_t$, scored memories $\{z_i\}_{i=1}^m$, input features $\{h_i\}$, parameters $\eta,\epsilon_s,\tau,\epsilon_T>0$, $\lambda\geq0$, $\alpha\in[0,1)$, replay batch size $b\geq1$.
\State Fix the scored set, input features, stored targets, and scoring randomness.
\State Compute $g_c=\nabla_\theta L_t(\theta_t)$ and virtual parameter $\theta^v=\theta_t-\eta g_c$.
\State Compute $\delta_i=\ell(z_i;\theta^v)-\ell(z_i;\theta_t)$ for each memory.
\State Form $C_{ij}=d(h_i,h_j)$, $K_{ij}=\exp(-C_{ij}/\epsilon_s)$, and row-normalized $S$.
\State Compute $\widetilde\delta=S\delta$ and $\widetilde s=[\widetilde\delta]_+$.
\State Compute $\Pi^*(\widetilde s)$ by Eq.~(\ref{eq:soft-transport-plan}) and $q_i^*=\sum_j\Pi^*_{ij}$.
\State Set $w^{\mathrm{SC}}=S^\top(q^*\odot\mathbf1_{\{\widetilde\delta>0\}})$.
\State Set $\bar w_i=(1-\alpha)/m+\alpha w_i^{\mathrm{SC}}$, $s_w=\sum_i\bar w_i$, and $p_i=\bar w_i/s_w$.
\State Draw $J_1,\ldots,J_b$ independently with replacement from $p$.
\State Form $\widehat g_r=(s_w/b)\sum_{r=1}^{b}\nabla_\theta\ell(z_{J_r};\theta_t)$, treating all weights as constants.
\State Update $\theta_{t+1}=\theta_t-\eta(g_c+\lambda\widehat g_r)$.
\end{algorithmic}
\end{algorithm}

Let $\mathcal H_t$ contain the current model, scored memories, scoring outcomes, weights, and per-example replay gradients, all fixed before drawing the replay indices. Conditional on $\mathcal H_t$, independence and $\mathbb P(J_r=i\mid\mathcal H_t)=p_i$ give
\begin{equation}
\mathbb E[\widehat g_r\mid\mathcal H_t]=s_w\sum_i p_i\nabla_\theta\ell(z_i;\theta_t)=\sum_i\bar w_i\nabla_\theta\ell(z_i;\theta_t).
\end{equation}
The same identity applies to conditional mean gradients if fresh per-example randomness is used independently and preserves those means.

Normalizing $\bar w$ and averaging sampled losses without the factor $s_w$ instead realizes unit-mass replay. If unequal-probability sampling is performed without replacement, its inclusion probabilities need not equal $bp_i$; an uncorrected minibatch mean is not generally unbiased for the displayed weighted sum. With positive inclusion probabilities $\pi_i^{\mathrm{inc}}$, the inclusion-probability-corrected form $\sum_{i\in\mathcal B}(\bar w_i/\pi_i^{\mathrm{inc}})\nabla\ell_i$ is unbiased, but computing or using those inclusion probabilities is a separate implementation choice. This correction is distinct from the uncorrected without-replacement retrieval used in the reported configuration descriptions.

\section{Conditional field model and sufficient gradient regularity}
\label{app:formal-conditioning}

This appendix makes explicit what the conditional field represents and when nearby representations can have similar expected responses. Fix a scoring state $\mathcal G_t$ containing $(\theta_t,g)$, the scored candidate set, the input representations, and any randomness used to construct the geometry. The field values $F_i$ and kernel $S$ are $\mathcal G_t$-measurable, and Assumption~\ref{ass:idiosyncratic-noise} is imposed conditional on this state. Consequently $F_i=\mathbb E_t[\delta_i(g)]$. The assumption concerns residual variation remaining under the conditional model; simply freezing a trained model does not establish it. In particular, historical label noise may become dependent after conditioning on a model trained with those labels.

For a sufficient construction, take a generic example $Z=(X,\widetilde Y)$ under a fixed family of conditional laws given $\phi(X)=h$ and $\mathcal G_t$. Choose versions of these laws and their conditional expectations at the representations under consideration, and suppose the following increments are integrable:
\begin{equation}
F_t(h;g)=\mathbb E[\ell(Z;\theta_t-\eta g)-\ell(Z;\theta_t)\mid\phi(X)=h,\mathcal G_t].
\label{eq:conditional-mean-field}
\end{equation}
To use this construction for the buffer, assume that each memory's conditional mean increment equals $F_t(h_i;g)$. This identification is part of the working model. The next proposition supplies a sufficient condition for spatial coherence within that model: if nearby representations have similar expected gradients along the virtual-update path, they also have similar expected loss increments. It does not derive either condition from adaptive buffer construction.

\begin{proposition}[Gradient regularity implies spatial coherence]
\label{prop:gradient-implies-coherence}
Similar conditional gradients at nearby representations imply similar expected virtual-update increments. For $s\in[0,1]$, let $\theta_{t,s}=\theta_t-s\eta g$ and
\begin{equation}
G_t(h;\theta)
=
\mathbb{E}\!\left[
\nabla_\theta\ell(Z;\theta)
\mid \phi(X)=h,\mathcal G_t
\right].
\label{eq:conditional-gradient-field}
\end{equation}

Assume that $s\mapsto\ell(Z;\theta_t-s\eta g)$ is almost surely absolutely continuous and has derivative $-\eta\langle\nabla_\theta\ell(Z;\theta_{t,s}),g\rangle$ for almost every $s\in[0,1]$. Assume the gradient integrand is jointly measurable and that

\begin{equation}
\mathbb E\!\left[\int_0^1\|\nabla_\theta\ell(Z;\theta_{t,s})\|_2\,ds\mid\phi(X)=h,\mathcal G_t\right]<\infty
\end{equation}
for every representation under consideration. Suppose also that, for every $s\in[0,1]$,
\begin{equation}
\|G_t(h;\theta_{t,s})-G_t(h';\theta_{t,s})\|_2
\leq L_Gd(h,h').
\label{eq:conditional-gradient-lipschitz}
\end{equation}
Then
\begin{equation}
|F_t(h;g)-F_t(h';g)|
\leq
\eta L_G\|g\|_2d(h,h'),
\label{eq:derived-field-lipschitz}
\end{equation}
so Assumption~\ref{ass:spatial-coherence} holds with $L_F=\eta L_G\|g\|_2$.
\end{proposition}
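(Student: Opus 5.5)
The plan is to write each expected increment as an integral along the virtual-update path, move the conditional expectation inside the integral, and then apply the gradient Lipschitz condition pointwise in $s$. Fix $h,h'$ under consideration. Almost sure absolute continuity of $s\mapsto\ell(Z;\theta_{t,s})$, together with the stated derivative, gives, by the fundamental theorem of calculus for absolutely continuous functions,
\begin{equation*}
\ell(Z;\theta_t-\eta g)-\ell(Z;\theta_t)=-\eta\int_0^1\langle\nabla_\theta\ell(Z;\theta_{t,s}),g\rangle\,ds
\end{equation*}
almost surely. Taking conditional expectation given $\phi(X)=h$ and $\mathcal G_t$ expresses $F_t(h;g)$ as the conditional expectation of the right-hand side.

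The second step exchanges expectation and integral, and I expect this to be the only real technical obstacle. The integrand is jointly measurable in $(s,Z)$, and Cauchy--Schwarz gives the bound $|\langle\nabla_\theta\ell,g\rangle|\leq\|g\|_2\|\nabla_\theta\ell\|_2$. By the assumed integrability of $\int_0^1\|\nabla_\theta\ell(Z;\theta_{t,s})\|_2\,ds$, Fubini--Tonelli applies under the chosen version of the conditional law. This yields
\begin{equation*}
F_t(h;g)=-\eta\int_0^1\langle G_t(h;\theta_{t,s}),g\rangle\,ds,
\end{equation*}
where $G_t(h;\theta_{t,s})$ is finite for almost every $s$ and is measurable in $s$. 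Care is needed to work with the fixed regular versions of the conditional laws so that the conditional expectation is an honest integral against a probability measure. That is exactly what the appendix setup (choosing versions at the representations considered) provides.

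Finally, subtract the same identity at $h'$, giving
\begin{equation*}
|F_t(h;g)-F_t(h';g)|\leq\eta\int_0^1|\langle G_t(h;\theta_{t,s})-G_t(h';\theta_{t,s}),g\rangle|\,ds.
\end{equation*}
Apply Cauchy--Schwarz and the bound \eqref{eq:conditional-gradient-lipschitz}, which holds for every $s$. This gives $\eta\|g\|_2L_Gd(h,h')$. Since each memory's conditional mean $F_i$ equals $F_t(h_i;g)$ by the working identification, Assumption~\ref{ass:spatial-coherence} follows with $L_F=\eta L_G\|g\|_2$.
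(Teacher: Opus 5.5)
Your proposal is correct and follows the paper's proof step for step: the fundamental theorem of calculus along the path $\theta_{t,s}$, then an interchange of conditional expectation and the $s$-integral justified by the integrability condition (giving $F_t(h;g)=-\eta\int_0^1\langle G_t(h;\theta_{t,s}),g\rangle\,ds$), then Cauchy--Schwarz combined with the Lipschitz bound on $G_t$. Your remarks on fixed versions of the conditional laws, and on the almost-everywhere finiteness and measurability of $s\mapsto G_t(h;\theta_{t,s})$, make explicit what the paper's one-line appeal to the integrability assumption leaves implicit.
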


\begin{proof}

For a generic example $Z$, absolute continuity and the assumed path derivative give

\begin{equation}
\ell(Z;\theta_t-\eta g)-\ell(Z;\theta_t)
=
-\eta\int_0^1
\langle\nabla_\theta\ell(Z;\theta_{t,s}),g\rangle ds.
\label{eq:ftc-virtual-increment}
\end{equation}
The integrability assumption permits conditional expectation and integration to be interchanged, yielding
\begin{equation}
F_t(h;g)
=
-\eta\int_0^1
\langle G_t(h;\theta_{t,s}),g\rangle ds.
\label{eq:conditional-field-integral}
\end{equation}
Therefore, by Cauchy--Schwarz and Eq.~(\ref{eq:conditional-gradient-lipschitz}),
\begin{equation}
\begin{aligned}
|F_t(h;g)-F_t(h';g)|
&\leq
\eta\int_0^1
\|G_t(h;\theta_{t,s})-G_t(h';\theta_{t,s})\|_2
\|g\|_2ds\\
&\leq
\eta L_G\|g\|_2d(h,h'),
\end{aligned}
\end{equation}
which proves Eq.~(\ref{eq:derived-field-lipschitz}).
\end{proof}

\subsection{Fixed-state symmetric corruption as an illustration}

Let the label space contain $C\geq2$ classes and let $\rho\in[0,1]$. With fixed inputs and clean labels $y_i$, symmetric corruption independently draws each observed label according to

\begin{equation}
\mathbb{P}(\widetilde Y_i=c\mid Y_i=y_i)
=
\begin{cases}
1-\rho, & c=y_i,\\
\rho/(C-1), & c\neq y_i.
\end{cases}
\label{eq:symmetric-label-noise}
\end{equation}
The clean label is introduced only to describe the data-generating process and is never available to SPHERE. For a fixed input, model state, and candidate direction, let
\begin{equation}
\delta_i(c;g)
=
\ell(x_i,c;\theta_t-\eta g)-\ell(x_i,c;\theta_t)
\label{eq:class-conditioned-increment}
\end{equation}
denote the increment that would be observed under label $c$. 

The corruption-averaged signed response and its centered residual are
\begin{equation}
\begin{aligned}
F_i^{(\rho)}(g)
&=
(1-\rho)\delta_i(y_i;g)
+
\frac{\rho}{C-1}\sum_{c\neq y_i}\delta_i(c;g),\\
\xi_i^{(\rho)}(g)
&=
\delta_i(\widetilde y_i;g)-F_i^{(\rho)}(g).
\end{aligned}
\label{eq:noisy-risk-decomposition}
\end{equation}

For this illustrative calculation, the inputs, clean labels, scoring model, candidate direction, and geometry are fixed independently of the hypothetical corrupted-label draws. Then $\xi_i^{(\rho)}$ are centered and independent. If all class-conditioned increments are finite, their finite ranges also imply sub-Gaussian residuals. This gives one possible residual model, but does not establish spatial coherence of $F^{(\rho)}$; that remains a separate requirement. In actual continual training, historical corruption can affect the model and its geometry, so neither conditional independence nor meaningful neighborhoods follows automatically from independent stream-label corruption.

For cross-entropy with positive class probabilities at both scoring states,
\begin{equation}
\delta_i(c;g)=\log\frac{p_{\theta_t}(c\mid x_i)}{p_{\theta_t-\eta g}(c\mid x_i)}.
\label{eq:ce-noisy-increment}
\end{equation}
An update that increases the true-class probability and decreases a corrupted-class probability therefore gives a negative clean-target increment and a positive corrupted-target increment. This is a concrete way that the same prediction change can improve clean prediction while appearing harmful under stored supervision. It establishes a possibility, not a rule that every corrupted label creates an isolated spike.

\section{Concentration of fixed-region and pairwise responses}
\label{app:spatial-evidence-proof}

Let $A$ be a nonempty $\mathcal G_t$-measurable region with $k$ examples, chosen without inspecting the residual realizations. Let $\bar\delta_A$ and $\bar F_A$ denote its average observed and latent increments. The same fixed-choice requirement applies to the pair $i,j$.

\begin{proposition}[Statistical spike--region separation]
\label{prop:spatial-support-concentration}
Suppose Assumptions~\ref{ass:spatial-coherence} and~\ref{ass:idiosyncratic-noise} hold. For distinct $i,j$ and every $x>0$,
\begin{equation}
\begin{aligned}
\mathbb{P}_t\!\left(
|\delta_i(g)-\delta_j(g)|
\geq L_Fd(h_i,h_j)+x
\right)
&\leq 2e^{-x^2/(4\sigma^2)},\\
\mathbb{P}_t\!\left(
|\bar\delta_A-\bar F_A|\geq x
\right)
&\leq 2e^{-kx^2/(2\sigma^2)}.
\end{aligned}
\label{eq:spike-region-concentration}
\end{equation}
\end{proposition}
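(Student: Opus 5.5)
The plan is to split each observed quantity into its latent mean and its residual using the field model in Eq.~(\ref{eq:field-model}). Assumption~\ref{ass:spatial-coherence} will absorb the mean part. Assumption~\ref{ass:idiosyncratic-noise} will then supply a sub-Gaussian tail for what remains, via Chernoff's bound. All probabilities are conditional on $\mathcal G_t$. Because $i$, $j$ and $A$ are $\mathcal G_t$-measurable and chosen without looking at the residuals, the conditional independence and sub-Gaussianity of the $\xi$'s can be used directly for these fixed indices.

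\emph{Pairwise bound.} Write $\delta_i-\delta_j=(F_i-F_j)+(\xi_i-\xi_j)$. By the triangle inequality and Assumption~\ref{ass:spatial-coherence},
\[
|\delta_i-\delta_j|\leq L_Fd(h_i,h_j)+|\xi_i-\xi_j| .
\]
The event $\{|\delta_i-\delta_j|\geq L_Fd(h_i,h_j)+x\}$ is therefore contained in $\{|\xi_i-\xi_j|\geq x\}$. Since $i\neq j$, $\xi_i$ and $\xi_j$ are independent, and both are $\sigma$-sub-Gaussian. Hence
\[
\mathbb E_t\bigl[e^{a(\xi_i-\xi_j)}\bigr]=\mathbb E_t\bigl[e^{a\xi_i}\bigr]\,\mathbb E_t\bigl[e^{-a\xi_j}\bigr]\leq e^{\sigma^2a^2}.
\]
So $\xi_i-\xi_j$ is $\sqrt2\sigma$-sub-Gaussian. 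The Chernoff bound $\mathbb P_t(X\geq x)\leq \inf_a e^{-ax+\sigma^2a^2}$ is minimized at $a=x/(2\sigma^2)$ and gives $e^{-x^2/(4\sigma^2)}$. The same bound holds for the lower tail, since $-(\xi_i-\xi_j)$ satisfies the same MGF bound. A union bound over the two tails gives the factor $2$.

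\emph{Region bound.} Here $\bar\delta_A-\bar F_A=\frac1k\sum_{i\in A}\xi_i$ exactly, so no coherence term appears. By independence,
\[
\mathbb E_t\bigl[e^{a\bar\xi_A}\bigr]=\prod_{i\in A}\mathbb E_t\bigl[e^{(a/k)\xi_i}\bigr]\leq e^{\sigma^2a^2/(2k)}.
\]
So the average is $(\sigma/\sqrt k)$-sub-Gaussian. Chernoff with $a=kx/\sigma^2$ gives the one-sided tail $e^{-kx^2/(2\sigma^2)}$. The two-sided union bound then yields the stated $2e^{-kx^2/(2\sigma^2)}$.

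The argument has no real obstacle. The only point needing care is justifying that independence and sub-Gaussianity apply to the particular indices used. This is exactly why the statement requires $A$ and the pair to be $\mathcal G_t$-measurable and selected without inspecting the residuals. If the selection were data-dependent, say $i$ chosen as the largest spike, the fixed-index MGF factorization would fail and a union bound over candidates would be needed instead.
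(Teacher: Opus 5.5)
Your proposal is correct and follows the paper's proof essentially step for step. It uses the same decomposition via the field model, the same event inclusion $\{|\delta_i-\delta_j|\geq L_Fd(h_i,h_j)+x\}\subseteq\{|\xi_i-\xi_j|\geq x\}$ from Assumption~\ref{ass:spatial-coherence}, and the same MGF factorizations showing that $\xi_i-\xi_j$ is $\sqrt{2}\sigma$-sub-Gaussian and the regional average is $\sigma/\sqrt{k}$-sub-Gaussian, followed by two-sided tail bounds. Your explicit optimization of the Chernoff parameter and your remark about non-adaptive index selection only spell out details the paper treats as standard.
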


These are concentration bounds, not a calibrated classifier of clean examples or an adaptive region-selection guarantee.

\begin{proof}
All probabilities and expectations below are conditional on $\mathcal{G}_t$. From Eq.~(\ref{eq:field-model}),
\begin{equation}
\delta_i(g)-\delta_j(g)=F_i-F_j+\xi_i-\xi_j.
\end{equation}
By Assumption~\ref{ass:spatial-coherence}, the event in the first line of Eq.~(\ref{eq:spike-region-concentration}) is contained in $\{|\xi_i-\xi_j|\geq x\}$. Conditional independence and Assumption~\ref{ass:idiosyncratic-noise} give
\begin{equation}
\begin{aligned}
\mathbb{E}_t[e^{u(\xi_i-\xi_j)}]
&=
\mathbb{E}_t[e^{u\xi_i}]\mathbb{E}_t[e^{-u\xi_j}]\\
&\leq e^{\sigma^2u^2}.
\end{aligned}
\end{equation}
Thus, $\xi_i-\xi_j$ is $\sqrt{2}\sigma$-sub-Gaussian, and its two-sided tail satisfies
\begin{equation}
\mathbb{P}_t(|\xi_i-\xi_j|\geq x)
\leq2e^{-x^2/(4\sigma^2)},
\end{equation}
which proves the first line of Eq.~(\ref{eq:spike-region-concentration}).

For a nonempty region $A$ of size $k$,
\begin{equation}
\bar\delta_A-\bar F_A
=
\frac{1}{k}\sum_{i\in A}\xi_i.
\end{equation}
For every $u\in\mathbb{R}$,
\begin{equation}
\begin{aligned}
\mathbb{E}_t\!\left[
\exp\!\left(\frac{u}{k}\sum_{i\in A}\xi_i\right)
\right]
&=
\prod_{i\in A}\mathbb{E}_t[e^{u\xi_i/k}]\\
&\leq
\exp\!\left(\frac{\sigma^2u^2}{2k}\right).
\end{aligned}
\end{equation}
Hence, the regional average is $\sigma/\sqrt{k}$-sub-Gaussian, and the standard two-sided tail bound proves the second line of Eq.~(\ref{eq:spike-region-concentration}).
\end{proof}

\section{Proofs for geometry-supported estimation}
\label{app:kernel-proofs}

\subsection{Exact spike--region contrast and smoothing order}

The ranking contrast can occur even for the dense positive kernel in the main text. Take two groups $B,A$ of $k\geq2$ memories each. All features in $B$ equal $h_B$, all features in $A$ equal $h_A$, and $d(h_A,h_B)=D>0$. Within-group kernel weights are one; cross-group weights are $\varepsilon=\exp(-D/\epsilon_s)\in(0,1)$. Let one memory $i\in B$ have increment $b$, let the remaining $B$ increments be zero, and let every $A$ increment be $a$, where $0<a<b<ka$.

All rows have denominator $k(1+\varepsilon)$. For a memory $c\in A$,
\begin{equation}
\begin{aligned}
\widetilde\delta_i&=\frac{b+\varepsilon ka}{k(1+\varepsilon)},\\
\widetilde\delta_c&=\frac{ka+\varepsilon b}{k(1+\varepsilon)},\\
\widetilde\delta_c-\widetilde\delta_i&=\frac{(1-\varepsilon)(ka-b)}{k(1+\varepsilon)}>0.
\end{aligned}
\end{equation}
The raw spike has $b>a$, but the coherent group has larger smoothed scores. Both scores are positive, so clipping preserves the reversal. Coincident within-group representations simplify the construction. If the distance is continuous in the representations, sufficiently small feature perturbations preserve the strict gap.
 This example establishes the possibility of a ranking reversal, not an accuracy improvement or a guarantee about individual final replay coefficients.

\begin{lemma}[Smoothing before clipping]
\label{lem:smoothing-before-clipping}
Averaging signed increments before clipping cannot exceed clipping first and then averaging. For every row-stochastic nonnegative matrix $S$ and every $\delta\in\mathbb{R}^m$,
\begin{equation}
[S\delta]_+\leq S[\delta]_+.
\label{eq:smoothing-before-clipping}
\end{equation}
\end{lemma}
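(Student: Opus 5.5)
The plan is to prove the inequality coordinatewise, using two elementary facts about the positive part: $x\leq[x]_+$ for every real $x$, and $[x]_+\geq0$. Fix a row index $i$. Since $S$ is nonnegative, each term satisfies $S_{ij}\delta_j\leq S_{ij}[\delta_j]_+$, and summing over $j$ gives
\begin{equation}
(S\delta)_i=\sum_{j=1}^{m}S_{ij}\delta_j\leq\sum_{j=1}^{m}S_{ij}[\delta_j]_+=(S[\delta]_+)_i .
\end{equation}
Also, $(S[\delta]_+)_i\geq0$, because it is a nonnegative combination of nonnegative numbers.

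Next we use that $x\mapsto[x]_+=\max(x,0)$ is the smallest nonnegative upper bound of $x$. The right-hand side $(S[\delta]_+)_i$ dominates both $(S\delta)_i$ and $0$, so it dominates their maximum. This gives $[(S\delta)_i]_+\leq(S[\delta]_+)_i$ for every $i$, which is exactly Eq.~(\ref{eq:smoothing-before-clipping}).

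There is no genuine obstacle. Row-stochasticity is not needed; only entrywise nonnegativity of $S$ is used. If one prefers a convexity argument, row-stochasticity lets one invoke Jensen's inequality for the convex function $[\cdot]_+$ with weights $S_{i\cdot}$, but the direct argument above is simpler and more general.
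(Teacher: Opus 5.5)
Your proof is correct, but it takes a different route from the paper. The paper gives a one-line Jensen argument: $[\cdot]_+$ is convex and each row $S_{i\cdot}$ is a probability vector, so $[\sum_j S_{ij}\delta_j]_+\leq\sum_j S_{ij}[\delta_j]_+$.

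You use only $S_{ij}\geq0$, the fact $x\leq[x]_+$, and the fact that $[x]_+$ is the least nonnegative upper bound of $x$. In effect you rely on $[\cdot]_+$ being sublinear (positively homogeneous and subadditive), not merely convex. This makes your argument more general. It shows the row-sum condition is superfluous: the inequality holds for any entrywise nonnegative matrix, such as the unnormalized kernel $K$ itself or a sub-stochastic smoother.

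The paper's Jensen route does need the weights to sum to one. In exchange, it extends verbatim to any convex function in place of $[\cdot]_+$. It also uses the same convexity of the positive part that the paper later exploits in the subgradient argument for the transport pullback.
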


\begin{proof}
The map $x\mapsto[x]_+$ is convex. For every row $i$, Jensen's inequality gives
\begin{equation}
\left[\sum_jS_{ij}\delta_j\right]_+
\leq
\sum_jS_{ij}[\delta_j]_+.
\end{equation}
Applying this coordinatewise proves the claim.
\end{proof}

\subsection{Proof of Proposition~\ref{prop:kernel_estimator}}

\begin{proof}
All expectations are conditional on $\mathcal{G}_t$. Using $\delta_j=F_j+\xi_j$ and $\sum_jS_{ij}=1$,
\begin{equation}
\tilde\delta_i-F_i
=
\sum_jS_{ij}(F_j-F_i)
+
\sum_jS_{ij}\xi_j.
\label{eq:estimation-decomposition}
\end{equation}
The second term has conditional mean zero. Assumption~\ref{ass:spatial-coherence} therefore gives
\begin{equation}
\begin{aligned}
|\mathbb{E}_t[\tilde\delta_i]-F_i|
&\leq
\sum_jS_{ij}|F_j-F_i|\\
&\leq
L_F\sum_jS_{ij}d(h_i,h_j)
=L_Fr_i,
\end{aligned}
\end{equation}
proving the first line of Eq.~(\ref{eq:kernel-estimator-bounds}).

For every $u\in\mathbb{R}$, conditional independence gives
\begin{equation}
\begin{aligned}
\mathbb{E}_t\!\left[
\exp\!\left(u\sum_jS_{ij}\xi_j\right)
\right]
&=
\prod_j\mathbb{E}_t[e^{uS_{ij}\xi_j}]\\
&\leq
\exp\!\left(
\frac{\sigma^2u^2}{2}\sum_jS_{ij}^2
\right).
\label{eq:smoothed-noise-mgf}
\end{aligned}
\end{equation}
Thus, the smoothed residual has variance at most $\sigma^2\sum_jS_{ij}^2$. The bias and centered noise terms in Eq.~(\ref{eq:estimation-decomposition}) have zero cross term, so
\begin{equation}
\mathbb{E}_t[(\tilde\delta_i-F_i)^2]
\leq
L_F^2r_i^2
+
\sigma^2\sum_jS_{ij}^2.
\label{eq:pre-effective-mass-mse}
\end{equation}
Because $0<K_{ij}\leq1$, $K_{ii}=1$, and $\sum_jK_{ij}=1+n_i$,
\begin{equation}
\sum_jS_{ij}^2
=
\frac{\sum_jK_{ij}^2}{(1+n_i)^2}
\leq
\frac{1}{1+n_i}.
\label{eq:effective-mass-bound}
\end{equation}
Equation~(\ref{eq:pre-effective-mass-mse}) proves the second line of Eq.~(\ref{eq:kernel-estimator-bounds}); Eq.~(\ref{eq:effective-mass-bound}) gives the additional $1/(1+n_i)$ bound. Finally, the positive-part map is nonexpansive,
\begin{equation}
|[x]_+-[y]_+|\leq|x-y|,
\end{equation}
so the third line of Eq.~(\ref{eq:kernel-estimator-bounds}) follows by squaring and taking conditional expectation.
\end{proof}

\paragraph{Exact comparison with the raw signed estimator.}
Let $v_j=\mathbb E_t[\xi_j^2]$ and $b_i=\sum_jS_{ij}(F_j-F_i)$. Conditional independence and centering imply
\begin{equation}
\begin{aligned}
\mathbb E_t[(\widetilde\delta_i-F_i)^2]&=b_i^2+\sum_jS_{ij}^2v_j,\\
\mathbb E_t[(\delta_i-F_i)^2]&=v_i.
\end{aligned}
\label{eq:exact-consensus-mse}
\end{equation}
With common variance $v$, subtracting gives the condition  $b_i^2<v(1-\|S_{i\cdot}\|_2^2)$. The nonexpansiveness of clipping bounds its estimation error but does not imply that this condition is necessary and sufficient for comparing two clipped estimators.

\section{Proofs for transport allocation}
\label{app:transport-proofs}

Equation~(\ref{eq:soft-transport-allocation}) is a one-sided transport problem because it constrains every source-column mass but leaves the target marginal free. For finite costs, $\tau>0$, and $\epsilon_T>0$, its feasible set is compact and its entropy-regularized objective is continuous under $0\log0=0$. Strict concavity makes the maximizer unique.

\subsection{Derivation of the transport gradient and pullback}

\begin{proof}
Set $p_{i\mid j}=m\Pi_{ij}$ and let $\Delta_m=\{p\in\mathbb R_+^m:\sum_i p_i=1\}$. For each source column $j$, Eq.~(\ref{eq:soft-transport-allocation}) reduces, up to the factor $1/m$, to
\begin{equation}
\max_{p_{\cdot\mid j}\in\Delta_m}
\left\{
\sum_i p_{i\mid j}(v_i-\tau C_{ij})
-
\tau\epsilon_T\sum_i p_{i\mid j}\log(mp_{i\mid j})
\right\}.
\label{eq:column-transport-objective}
\end{equation}
The entropy-regularized objective is strictly concave on the simplex. With finite scores and positive entropy coefficient, its maximizing probabilities are positive: moving mass into a zero coordinate has a positive, unbounded entropy derivative. The interior Lagrangian first-order condition therefore yields
\begin{equation}
p_{i\mid j}^*(v)
=\frac{\exp((v_i-\tau C_{ij})/(\tau\epsilon_T))}
{\sum_k\exp((v_k-\tau C_{kj})/(\tau\epsilon_T))}.
\end{equation}
Using $\Pi_{ij}^*(v)=p_{i\mid j}^*(v)/m$ gives Eq.~(\ref{eq:soft-transport-plan}). Substitution gives the transport value in log-sum-exp form
\begin{equation}
\mathcal{V}_{\mathrm{OT}}(v)
=
\frac{\tau\epsilon_T}{m}
\sum_{j=1}^{m}
\log\left[
\frac{1}{m}\sum_i\exp((v_i-\tau C_{ij})/(\tau\epsilon_T))
\right].
\label{eq:soft-transport-logsumexp}
\end{equation}
The transport value is smooth, convex, coordinatewise nondecreasing, and translation equivariant:
\begin{equation}
\mathcal{V}_{\mathrm{OT}}(v+c\mathbf{1})
=
\mathcal{V}_{\mathrm{OT}}(v)+c.
\label{eq:transport-envelope-translation}
\end{equation}
Differentiating the log-sum-exp expression gives
\begin{equation}
\begin{aligned}
\frac{\partial\mathcal{V}_{\mathrm{OT}}(v)}{\partial v_i}
&=
\frac{1}{m}\sum_j
\frac{\exp((v_i-\tau C_{ij})/(\tau\epsilon_T))}
{\sum_k\exp((v_k-\tau C_{kj})/(\tau\epsilon_T))}\\
&=
\sum_j\Pi_{ij}^*(v)
=q_i^*(v),
\end{aligned}
\end{equation}
which establishes the target-marginal sensitivity used in Eq.~(\ref{eq:transport-pullback}).

To differentiate through consensus and clipping, define locally $\chi_i=\mathbf{1}\{\tilde\delta_i>0\}$, taking the zero subgradient when $\tilde\delta_i=0$. For any $\delta'$, convexity of the positive-part map gives the following coordinatewise inequality:
\begin{equation}
[S\delta']_+-[S\delta]_+
\geq
\chi\odot S(\delta'-\delta).
\label{eq:relu-subgradient-inequality}
\end{equation}
Since $q^*(\tilde s)=\nabla\mathcal{V}_{\mathrm{OT}}(\tilde s)\geq0$, convexity of $\mathcal{V}_{\mathrm{OT}}$ implies
\begin{equation}
\begin{aligned}
\mathcal V_{\mathrm{OT}}([S\delta']_+)-\mathcal V_{\mathrm{OT}}([S\delta]_+)
&\geq
\langle q^*(\tilde s),[S\delta']_+-[S\delta]_+\rangle\\
&\geq
\left\langle
S^\top(q^*(\tilde s)\odot\chi),
\delta'-\delta
\right\rangle.
\end{aligned}
\end{equation}
This proves that Eq.~(\ref{eq:transport-pullback}) is a valid subgradient (and the gradient away from clipping boundaries). Nonnegativity follows from $S,q^*,\chi\geq0$, and row-stochasticity gives
\begin{equation}
\begin{aligned}
\|w^{\mathrm{SC}}\|_1
&=
\mathbf{1}^\top S^\top(q^*(\tilde s)\odot\chi)\\
&=
\sum_iq_i^*(\tilde s)\chi_i
\leq
\sum_iq_i^*(\tilde s)=1,
\end{aligned}
\end{equation}
This proves the nonnegativity and mass bound used in Eq.~(\ref{eq:blended-weight-mass}).
\end{proof}

\subsection{A conditional regional-mass bound}

\begin{proposition}[Source-local regional mass bound]
\label{prop:regional-transport}
Let $A$ contain $k$ samples, with $1\leq k<m$, and write $q^*(A)=\sum_{i\in A}q_i^*(\widetilde s)$. Suppose there is $\Delta_A>0$ such that
\begin{equation}
\min_{i\in A}(\widetilde s_i-\tau C_{ij})
\geq
\max_{\ell\notin A}(\widetilde s_\ell-\tau C_{\ell j})+\Delta_A
\end{equation}
for every source $j\in A$. Then
\begin{equation}
q^*(A)
\geq
\frac{k}{m}
\left[
1+\frac{m-k}{k}
\exp\!\left(-\frac{\Delta_A}{\tau\epsilon_T}\right)
\right]^{-1}.
\label{eq:regional-transport-lower-bound}
\end{equation}
\end{proposition}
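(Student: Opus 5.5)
The plan is to work column by column with the closed-form plan in Eq.~(\ref{eq:soft-transport-plan}) and lower-bound the target mass that each source $j\in A$ sends into $A$. Since $q^*(A)=\sum_{i\in A}\sum_j\Pi^*_{ij}(\widetilde s)$ and every entry is nonnegative, we can discard the sources $j\notin A$:
\begin{equation}
q^*(A)\geq\sum_{j\in A}\sum_{i\in A}\Pi^*_{ij}(\widetilde s)=\frac1m\sum_{j\in A}P_j(A),
\end{equation}
where $P_j(A)=\sum_{i\in A}p^*_{i\mid j}$ is the softmax probability that column $j$ assigns to $A$. Once we show $P_j(A)\geq[1+\frac{m-k}{k}e^{-\Delta_A/(\tau\epsilon_T)}]^{-1}$ for every $j\in A$, summing over the $k$ sources gives exactly Eq.~(\ref{eq:regional-transport-lower-bound}).

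For the per-column bound, fix $j\in A$ and write $u_i=(\widetilde s_i-\tau C_{ij})/(\tau\epsilon_T)$. Then
\begin{equation}
P_j(A)=\Bigl[1+\frac{\sum_{\ell\notin A}e^{u_\ell}}{\sum_{i\in A}e^{u_i}}\Bigr]^{-1}.
\end{equation}
Set $\underline u=\min_{i\in A}u_i$ and $\bar u=\max_{\ell\notin A}u_\ell$. By hypothesis, $\underline u\geq\bar u+\Delta_A/(\tau\epsilon_T)$. The numerator is at most $(m-k)e^{\bar u}$ and the denominator is at least $ke^{\underline u}$, so the ratio is at most $\frac{m-k}{k}e^{-\Delta_A/(\tau\epsilon_T)}$. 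Since $x\mapsto(1+x)^{-1}$ is decreasing, this yields the claimed lower bound on $P_j(A)$. The condition $k<m$ guarantees that the complement is nonempty, so $\bar u$ is well defined.

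No step here is a real obstacle; the argument is a direct softmax comparison. The only care needed is bookkeeping. First, the hypothesis is source-dependent, because $C_{ij}$ enters, and that is why we keep only the sources $j\in A$. Second, we must use the normalization $\Pi^*_{ij}=p^*_{i\mid j}/m$, which makes the $k$ surviving columns contribute the prefactor $k/m$. The bound is conservative: mass arriving from sources outside $A$ is ignored, and the bound tends to $k/m$ as $\Delta_A\to0$, matching the uniform-mass baseline.
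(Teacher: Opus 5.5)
Your proof is correct and follows the paper's argument essentially step for step. You discard the nonnegative contributions from sources $j\notin A$, bound each retained column's softmax mass on $A$ by comparing $k e^{\underline u}$ with $(m-k)e^{\bar u}$, and sum the $k$ columns with the $1/m$ normalization. One slip in your closing remark: the limits are reversed. As $\Delta_A\to0$ the bound degrades to $k^2/m^2$, and it approaches the uniform baseline $k/m$ only as $\Delta_A\to\infty$; for finite $\Delta_A$ it stays strictly below $k/m$, as the paper itself notes.
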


The premise concerns only sources inside $A$. For finite $\Delta_A$, the displayed lower bound is smaller than $k/m$, so it does not establish that the total target mass exceeds uniform replay. Nor does it establish a lower bound on the final coefficients after the signed mask and $S^\top$ pullback. It only quantifies mass supplied by the protected source subset under the stated adjusted-score gap.

\begin{proof}
Fix $j\in A$ and write
\begin{equation}
M_j=\max_{\ell\notin A}(\tilde s_\ell-\tau C_{\ell j}).
\end{equation}
By the adjusted-score gap assumed in Proposition~\ref{prop:regional-transport}, every $i\in A$ satisfies
\begin{equation}
\tilde s_i-\tau C_{ij}\geq M_j+\Delta_A.
\end{equation}
Therefore,
\begin{equation}
\begin{aligned}
\sum_{i\in A}
\exp\!\left(\frac{\tilde s_i-\tau C_{ij}}{\tau\epsilon_T}\right)
&\geq
k\exp\!\left(\frac{M_j+\Delta_A}{\tau\epsilon_T}\right),\\
\sum_{\ell\notin A}
\exp\!\left(\frac{\tilde s_\ell-\tau C_{\ell j}}{\tau\epsilon_T}\right)
&\leq
(m-k)\exp\!\left(\frac{M_j}{\tau\epsilon_T}\right).
\end{aligned}
\end{equation}
Hence the conditional mass sent from source $j$ into $A$ satisfies
\begin{equation}
\sum_{i\in A}m\Pi_{ij}^*(\tilde s)
\geq
\left[
1+\frac{m-k}{k}
\exp\!\left(-\frac{\Delta_A}{\tau\epsilon_T}\right)
\right]^{-1}.
\end{equation}
Finally,
\begin{equation}
q^*(A)
=
\frac{1}{m}\sum_{j=1}^{m}
\sum_{i\in A}m\Pi_{ij}^*(\tilde s).
\end{equation}
Retaining only the $k$ nonnegative source-column terms with $j\in A$ proves Eq.~(\ref{eq:regional-transport-lower-bound}).
\end{proof}

\section{Proof and refinements of Theorem~\ref{thm:pipeline-noise}}
\label{app:pipeline-proof}

\begin{proof}
All expectations are conditional on $\mathcal{G}_t$. Let $\zeta=S\xi$ and define, locally in this proof,
\begin{equation}
\begin{aligned}
v_*&=\sigma^2\max_i\sum_jS_{ij}^2,\\
r_*&=\max_i r_i.
\end{aligned}
\label{eq:pipeline-local-notation}
\end{equation}
Equation~(\ref{eq:smoothed-noise-mgf}) implies that every $\zeta_i$ is $\sqrt{v_*}$-sub-Gaussian. Moreover, Assumption~\ref{ass:spatial-coherence} gives
\begin{equation}
\|SF-F\|_\infty
=
\max_i\left|\sum_jS_{ij}(F_j-F_i)\right|
\leq L_Fr_*.
\label{eq:pipeline-bias-bound}
\end{equation}
By monotonicity and translation equivariance of $\mathcal{V}_{\mathrm{OT}}$,
\begin{equation}
|\mathcal{V}_{\mathrm{OT}}(v)-\mathcal{V}_{\mathrm{OT}}(v')|
\leq\|v-v'\|_\infty.
\label{eq:envelope-lipschitz}
\end{equation}
The positive-part map is coordinatewise one-Lipschitz, so
\begin{equation}
\begin{aligned}
\mathcal V_{\mathrm{OT}}([S\delta]_+)
&=
\mathcal{V}_{\mathrm{OT}}([SF+\zeta]_+)\\
&\leq
\mathcal{V}_{\mathrm{OT}}([F]_+)
+L_Fr_*
+\|\zeta\|_\infty.
\label{eq:pipeline-supnorm-bound}
\end{aligned}
\end{equation}
For every $u>0$,
\begin{equation}
\begin{aligned}
\mathbb{E}_t[e^{u\|\zeta\|_\infty}]
&\leq
\sum_i\mathbb{E}_t[e^{u|\zeta_i|}]\\
&\leq
2m\exp(v_*u^2/2).
\end{aligned}
\end{equation}
Jensen's inequality therefore gives
\begin{equation}
\mathbb{E}_t[\|\zeta\|_\infty]
\leq
\frac{\log(2m)}{u}+\frac{v_*u}{2}.
\end{equation}
Optimizing at $u=\sqrt{2\log(2m)/v_*}$ when $v_*>0$ yields
\begin{equation}
\mathbb{E}_t[\|\zeta\|_\infty]
\leq
\sqrt{2v_*\log(2m)}.
\label{eq:expected-smoothed-maximum}
\end{equation}
The case $v_*=0$ is immediate. Substituting Eqs.~(\ref{eq:pipeline-local-notation}) and~(\ref{eq:expected-smoothed-maximum}) into Eq.~(\ref{eq:pipeline-supnorm-bound}) proves Eq.~(\ref{eq:pipeline-noise-control}).
\end{proof}

\begin{corollary}[High-probability and temperature refinements]
\label{cor:pipeline-refinements}
Under the assumptions of Theorem~\ref{thm:pipeline-noise}, the transport-value guarantee also admits a deviation form and an expectation bound depending explicitly on the entropy scale. Let $r_*=\max_i r_i$. For every $\zeta_0\in(0,1)$, with conditional probability at least $1-\zeta_0$,
\begin{equation}
\begin{aligned}
\mathcal V_{\mathrm{OT}}([S\delta]_+)
\leq{}&
\mathcal{V}_{\mathrm{OT}}([F]_+)+L_Fr_*\\
&+
\sigma\sqrt{
2\log\!\left(\frac{2m}{\zeta_0}\right)
\max_i\sum_jS_{ij}^2
}.
\label{eq:pipeline-noise-high-probability}
\end{aligned}
\end{equation}
Furthermore,
\begin{equation}
\begin{aligned}
\mathbb{E}_t[\mathcal V_{\mathrm{OT}}([S\delta]_+)]
\leq{}&
\mathcal{V}_{\mathrm{OT}}([F]_+)+L_Fr_*\\
&+
\min\Bigg\{
\sigma\sqrt{2\log(2m)\max_i\sum_jS_{ij}^2},\\
&\hspace{22mm}
\frac{\sigma^2\max_i\sum_jS_{ij}^2}{2\tau\epsilon_T}
+\tau\epsilon_T\log2
\Bigg\}.
\label{eq:pipeline-temperature-refinement}
\end{aligned}
\end{equation}
\end{corollary}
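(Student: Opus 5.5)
The corollary has two parts. The high-probability part reuses the deterministic reduction from the proof of Theorem~\ref{thm:pipeline-noise} and replaces the expectation bound on $\|\zeta\|_\infty$ by a tail bound. The temperature refinement comes from a second route that exploits the log-sum-exp form (\ref{eq:soft-transport-logsumexp}) directly, instead of passing through $\|\zeta\|_\infty$.

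\textbf{High-probability bound.} Write $\zeta=S\xi$, $v_*=\sigma^2\max_i\sum_jS_{ij}^2$ and $r_*=\max_i r_i$. Eq.~(\ref{eq:pipeline-supnorm-bound}) holds pathwise:
\[
\mathcal V_{\mathrm{OT}}([S\delta]_+)\leq \mathcal V_{\mathrm{OT}}([F]_+)+L_Fr_*+\|\zeta\|_\infty .
\]
By Eq.~(\ref{eq:smoothed-noise-mgf}), each $\zeta_i$ is $\sqrt{v_*}$-sub-Gaussian, so $\mathbb P_t(|\zeta_i|\geq x)\leq 2e^{-x^2/(2v_*)}$. A union bound over the $m$ coordinates gives $\mathbb P_t(\|\zeta\|_\infty\geq x)\leq 2me^{-x^2/(2v_*)}$. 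Setting the right-hand side equal to $\zeta_0$ gives $x=\sqrt{2v_*\log(2m/\zeta_0)}$, which is Eq.~(\ref{eq:pipeline-noise-high-probability}). The degenerate case $v_*=0$ is trivial.

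\textbf{Temperature refinement.} The first entry of the minimum is Theorem~\ref{thm:pipeline-noise}, so it remains to prove the second entry separately. Set $\beta=\tau\epsilon_T$. First remove the bias as before:
\[
\mathcal V_{\mathrm{OT}}([SF+\zeta]_+)\leq \mathcal V_{\mathrm{OT}}([F]_++\zeta^+),\qquad \zeta^+=[\zeta]_+ .
\]
This uses monotonicity of $\mathcal V_{\mathrm{OT}}$, together with $[a+b]_+\leq[a]_++[b]_+$ and $[SF]_+\leq[F]_++L_Fr_*\mathbf 1$. Translation equivariance then extracts the $L_Fr_*$ term.

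Next, use the log-sum-exp form. With $\pi_{i|j}$ the Gibbs weights at $u=[F]_+$,
\[
\mathcal V_{\mathrm{OT}}(u+\zeta^+)-\mathcal V_{\mathrm{OT}}(u)=\frac{\beta}{m}\sum_j\log\sum_i\pi_{i|j}\,e^{\zeta^+_i/\beta}.
\]
Jensen's inequality (concavity of $\log$, inside the expectation) then gives
\[
\mathbb E_t[\,\cdot\,]\leq\frac{\beta}{m}\sum_j\log\sum_i\pi_{i|j}\,\mathbb E_t e^{\zeta^+_i/\beta}.
\]
Because the $\pi_{i|j}$ sum to one over $i$, this is at most $\beta\log\max_i\mathbb E_t e^{\zeta_i^+/\beta}$.

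Finally, bound the moment generating function of the positive part. Since $e^{\zeta^+/\beta}\leq e^{\zeta/\beta}+1$,
\[
\mathbb E_t e^{\zeta_i^+/\beta}\leq e^{v_*/(2\beta^2)}+1\leq 2e^{v_*/(2\beta^2)}.
\]
Taking $\beta\log$ of this yields $v_*/(2\beta)+\beta\log 2$, which is exactly the second entry of the minimum. Taking the smaller of the two valid bounds gives Eq.~(\ref{eq:pipeline-temperature-refinement}).

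\textbf{Main obstacle.} The delicate step is the Jensen exchange. The Gibbs weights $\pi_{i|j}$ depend on $u=[F]_+$, which is deterministic given $\mathcal G_t$. This is why $[F]_+$, rather than $[SF]_+$, is used as the anchor, and why the bias must be removed via monotonicity before the log-sum-exp step. Handling the clipping through $\zeta^+$ rather than $\zeta$ is what costs the additive $\beta\log2$.
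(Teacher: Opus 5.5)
Your proposal is correct and follows essentially the same route as the paper: the same union-bound tail argument for the deviation form, and for the temperature branch the same bias removal via monotonicity and translation equivariance, followed by conditional Jensen on the log-sum-exp form anchored at the $\mathcal G_t$-measurable $[F]_+$. The differences are cosmetic: you use $[\zeta]_+$ where the paper uses $|\zeta|$ (both give the moment bound $2e^{v_*/(2(\tau\epsilon_T)^2)}$) and normalize to Gibbs weights rather than keeping the raw $a_{ij}$, and your first display in that part should read $\mathcal V_{\mathrm{OT}}([SF+\zeta]_+)\leq \mathcal V_{\mathrm{OT}}([F]_++\zeta^+)+L_Fr_*$, as your next sentence already indicates.
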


\begin{proof}
Set $\zeta=S\xi$ and $v_*=\sigma^2\max_i\sum_jS_{ij}^2$. The sub-Gaussian union bound gives
\begin{equation}
\mathbb{P}_t(\|\zeta\|_\infty\geq x)
\leq
2m\exp\!\left(-\frac{x^2}{2v_*}\right).
\end{equation}
When $v_*=0$, the residuals vanish almost surely and the deviation statement is immediate. For $v_*>0$, taking $x=\sqrt{2v_*\log(2m/\zeta_0)}$ in Eq.~(\ref{eq:pipeline-supnorm-bound}) proves Eq.~(\ref{eq:pipeline-noise-high-probability}).

For the temperature-dependent bound, coordinatewise,
\begin{equation}
[(SF)_i+\zeta_i]_+
\leq
[F_i]_++L_Fr_*+|\zeta_i|.
\end{equation}
Monotonicity and translation equivariance yield
\begin{equation}
\mathcal V_{\mathrm{OT}}([S\delta]_+)
\leq
L_Fr_*+
\mathcal{V}_{\mathrm{OT}}([F]_++|\zeta|).
\end{equation}
For a fixed source column $j$, let
\begin{equation}
a_{ij}
=
\exp\!\left(\frac{[F_i]_+-\tau C_{ij}}{\tau\epsilon_T}\right).
\end{equation}
Conditional Jensen's inequality and sub-Gaussianity give
\begin{equation}
\begin{aligned}
\mathbb{E}_t\!\left[
\tau\epsilon_T\log\left(\frac1m\sum_i a_{ij}e^{|\zeta_i|/(\tau\epsilon_T)}\right)
\right]
&\leq
\tau\epsilon_T\log\left(\frac1m\sum_i a_{ij}
\mathbb{E}_t[e^{|\zeta_i|/(\tau\epsilon_T)}]\right)\\
&\leq
\tau\epsilon_T\log\left(\frac1m\sum_i a_{ij}\right)
+
\frac{v_*}{2\tau\epsilon_T}
+\tau\epsilon_T\log2.
\end{aligned}
\end{equation}
Averaging over source columns proves the second branch in Eq.~(\ref{eq:pipeline-temperature-refinement}). The first branch is Theorem~\ref{thm:pipeline-noise}; taking the smaller proves the corollary.
\end{proof}

\section{Local risk descent and its implementation limits}
\label{app:risk-descent-proof}

This appendix varies the candidate update direction while keeping the scoring geometry fixed; it does not optimize the model parameters directly. Evaluate $w^{\mathrm{SC}}$ at $\delta(g_c)$ and define

\begin{equation}
g_{\mathrm{SC}}^v=\sum_iw_i^{\mathrm{SC}}\nabla_\theta\ell(z_i;\theta_t-\eta g_c).
\label{eq:virtual-point-replay-gradient}
\end{equation}

\begin{theorem}[Prospective spatial-risk descent]
\label{thm:risk-descent}
Fix $S$, $C$, the stored supervision, and all scoring hyperparameters as $g$ varies, and write $\Psi(g)=\mathcal V_{\mathrm{OT}}([S\delta(g)]_+)$. If the memory losses are differentiable and no coordinate of $S\delta(g_c)$ is zero, then
\begin{equation}
\begin{aligned}
\nabla_g\Psi(g_c)
&=-\eta g_{\mathrm{SC}}^{v},\\
D\Psi(g_c)
[g_{\mathrm{SC}}^{v}]
&=-\eta\lVert g_{\mathrm{SC}}^{v}\rVert_2^2.
\end{aligned}
\label{eq:exact-risk-descent-direction}
\end{equation}
\end{theorem}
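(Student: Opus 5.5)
The plan is a chain-rule computation through the three maps $g\mapsto\delta(g)$, $\delta\mapsto[S\delta]_+$, and $v\mapsto\mathcal V_{\mathrm{OT}}(v)$, all evaluated at $g_c$ with $S$, $C$, stored targets and hyperparameters held fixed.

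\emph{Innermost map.} By definition $\delta_i(g)=\ell(z_i;\theta_t-\eta g)-\ell(z_i;\theta_t)$. Since the second term does not depend on $g$ and each memory loss is differentiable, the ordinary chain rule gives
\begin{equation*}
\nabla_g\delta_i(g_c)=-\eta\,\nabla_\theta\ell(z_i;\theta_t-\eta g_c).
\end{equation*}
Hence the Jacobian of $\delta$ at $g_c$ is $-\eta$ times the matrix whose rows are the virtual-point gradients.

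\emph{Middle map.} This is the only genuinely nonsmooth step, and it is where the hypothesis enters. Because no coordinate of $\widetilde\delta=S\delta(g_c)$ is zero and $\delta$ is continuous in $g$ (it is differentiable), the sign pattern of $S\delta(g)$ is constant on a neighborhood of $g_c$. On that neighborhood $[S\delta(g)]_+=D_\chi S\delta(g)$, where $D_\chi=\operatorname{diag}(\mathbf 1_{\{\widetilde\delta>0\}})$ is a fixed matrix. So the clipping acts locally as a fixed linear map and is differentiable there. The subgradient issue discussed in Appendix~\ref{app:transport-proofs} therefore does not arise.

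\emph{Outer map.} Appendix~\ref{app:transport-proofs} shows that $\mathcal V_{\mathrm{OT}}$ is smooth, being given by the log-sum-exp form, with $\nabla_v\mathcal V_{\mathrm{OT}}=q^*(v)$. Composing the three maps gives
\begin{equation*}
\nabla_g\Psi(g_c)=\sum_i\Bigl[S^\top\bigl(q^*(\widetilde s)\odot\mathbf 1_{\{\widetilde\delta>0\}}\bigr)\Bigr]_i\,\nabla_g\delta_i(g_c).
\end{equation*}
The bracketed vector is exactly $w^{\mathrm{SC}}$ from Eq.~(\ref{eq:transport-pullback}). Substituting the innermost Jacobian yields $\nabla_g\Psi(g_c)=-\eta\sum_i w^{\mathrm{SC}}_i\nabla_\theta\ell(z_i;\theta_t-\eta g_c)=-\eta g^v_{\mathrm{SC}}$, which is the first identity.

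\emph{Directional derivative.} Since $\Psi$ is differentiable at $g_c$, its derivative in direction $g^v_{\mathrm{SC}}$ is the inner product of the gradient with that direction. This gives $\langle -\eta g^v_{\mathrm{SC}},g^v_{\mathrm{SC}}\rangle=-\eta\|g^v_{\mathrm{SC}}\|_2^2$, the second identity.

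The only delicate point is the middle map: local constancy of the clipping mask, which the nonzero-coordinate hypothesis provides via continuity. The rest is routine. One should also note that $w^{\mathrm{SC}}$ must be computed at $\delta(g_c)$, as specified before the theorem, so that it matches the gradient of $\Psi$ at $g_c$.
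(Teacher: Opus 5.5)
Your proposal is correct and follows the paper's proof: a chain rule through $\delta(g)$, the clipping, and the log-sum-exp transport value, identifying $w^{\mathrm{SC}}$ as the composite pullback and getting the directional derivative as an inner product. Your local-constancy argument for the clipping mask only spells out the differentiability of $\Psi$ at $g_c$, which the paper asserts in one line from the nonzero-coordinate hypothesis.
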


\begin{proof}
Because no coordinate of $S\delta(g_c)$ is zero, $\Psi$ is differentiable at $g_c$. For every $i$,
\begin{equation}
\nabla_g\delta_i(g_c)
=
-\eta\nabla_\theta\ell(z_i;\theta_t-\eta g_c).
\label{eq:increment-jacobian}
\end{equation}
The chain rule and Eq.~(\ref{eq:transport-pullback}) give
\begin{equation}
\begin{aligned}
\nabla_g\Psi(g_c)
&=
\sum_iw_i^{\mathrm{SC}}\nabla_g\delta_i(g_c)\\
&=
-\eta\sum_iw_i^{\mathrm{SC}}
\nabla_\theta\ell(z_i;\theta_t-\eta g_c)\\
&=-\eta g_{\mathrm{SC}}^{v}.
\end{aligned}
\end{equation}
Taking the inner product with $g_{\mathrm{SC}}^{v}$ proves Eq.~(\ref{eq:exact-risk-descent-direction}).
\end{proof}

\begin{corollary}[Finite-step and implemented-gradient extensions]
\label{cor:risk-descent-extensions}

Assume throughout the fixed-state, differentiability, and nonzero-consensus conditions of Theorem~\ref{thm:risk-descent}. The following bounds distinguish a correction in the virtual replay direction from the current-parameter replay direction used in the ideal update.

Let $\Psi(g)=\mathcal V_{\mathrm{OT}}([S\delta(g)]_+)$. If $\Psi$ has an $L_\Psi$-Lipschitz gradient on the segment from $g_c$ to $g_c+\beta g_{\mathrm{SC}}^{v}$, with $L_\Psi>0$ and $\beta>0$, then
\begin{equation}
\Psi(g_c+\beta g_{\mathrm{SC}}^{v})
\leq
\Psi(g_c)
-
\left(\beta\eta-\frac{L_\Psi\beta^2}{2}\right)
\|g_{\mathrm{SC}}^{v}\|_2^2.
\label{eq:finite-step-risk-descent}
\end{equation}
Hence every $0<\beta<2\eta/L_\Psi$ gives strict descent when $g_{\mathrm{SC}}^{v}\neq0$.

If each memory gradient is $L_\ell$-Lipschitz in $\theta$, define
\begin{equation}
\begin{aligned}
g_{\mathrm{SC}}^{0}
&=\sum_iw_i^{\mathrm{SC}}\nabla_\theta\ell(z_i;\theta_t),\\
g_u
&=\frac{1}{m}\sum_i\nabla_\theta\ell(z_i;\theta_t),\\
\bar g
&=(1-\alpha)g_u+\alpha g_{\mathrm{SC}}^{0}.
\end{aligned}
\label{eq:implemented-replay-directions}
\end{equation}
Then
\begin{equation}
D\Psi(g_c)[\bar g]
\leq
-\eta\|g_{\mathrm{SC}}^{v}\|_2
\left[
\alpha\bigl(\|g_{\mathrm{SC}}^{v}\|_2-\eta L_\ell\|g_c\|_2\bigr)
-(1-\alpha)\|g_u\|_2
\right].
\label{eq:blended-risk-descent-condition}
\end{equation}

Thus, the current-parameter blended replay direction is locally descending for $\Psi$ whenever $g_{\mathrm{SC}}^v\neq0$ and the bracket is positive. This conclusion concerns an infinitesimal correction $g_c+\beta\bar g$. The actual replay strength defines a finite correction and requires its own step-size and smoothness conditions; it is not covered by the finite-step bound for $g_{\mathrm{SC}}^v$ without that additional argument.

\end{corollary}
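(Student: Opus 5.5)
Two separate bounds are needed: a finite-step descent bound for the virtual direction $g_{\mathrm{SC}}^v$, and a first-order bound for the blended current-parameter direction $\bar g$. Both start from Theorem~\ref{thm:risk-descent}, which gives $\nabla_g\Psi(g_c)=-\eta g_{\mathrm{SC}}^v$.

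\emph{Finite-step bound.} I will apply the standard descent lemma along the segment. Because $\nabla\Psi$ is $L_\Psi$-Lipschitz on the segment from $g_c$ to $g_c+\beta g_{\mathrm{SC}}^v$, the fundamental theorem of calculus along that segment gives
\[
\Psi(g_c+\beta g_{\mathrm{SC}}^v)\le\Psi(g_c)+\beta\langle\nabla\Psi(g_c),g_{\mathrm{SC}}^v\rangle+\tfrac{L_\Psi\beta^2}{2}\|g_{\mathrm{SC}}^v\|_2^2 .
\]
The remainder comes from integrating $\langle\nabla\Psi(g_c+s\beta g_{\mathrm{SC}}^v)-\nabla\Psi(g_c),\beta g_{\mathrm{SC}}^v\rangle$ over $s\in[0,1]$. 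Substituting the gradient identity makes the linear term $-\beta\eta\|g_{\mathrm{SC}}^v\|_2^2$, which gives Eq.~(\ref{eq:finite-step-risk-descent}). The coefficient $\beta\eta-L_\Psi\beta^2/2$ is positive exactly when $0<\beta<2\eta/L_\Psi$.

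\emph{Blended direction.} Under the nonzero-consensus assumption $\Psi$ is differentiable at $g_c$, so
\[
D\Psi(g_c)[\bar g]=-\eta\langle g_{\mathrm{SC}}^v,\bar g\rangle .
\]
I will split $\bar g=\alpha g_{\mathrm{SC}}^v+\alpha(g_{\mathrm{SC}}^0-g_{\mathrm{SC}}^v)+(1-\alpha)g_u$ and bound each inner product.
\begin{itemize}
\item The first term contributes $-\eta\alpha\|g_{\mathrm{SC}}^v\|_2^2$.
\item For the second term, the gradient Lipschitz property gives $\|\nabla\ell(z_i;\theta_t)-\nabla\ell(z_i;\theta_t-\eta g_c)\|_2\le\eta L_\ell\|g_c\|_2$ for each $i$. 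Since $w^{\mathrm{SC}}\ge0$ and $\|w^{\mathrm{SC}}\|_1\le1$ (Appendix~\ref{app:transport-proofs}), the triangle inequality gives $\|g_{\mathrm{SC}}^0-g_{\mathrm{SC}}^v\|_2\le\eta L_\ell\|g_c\|_2$. Cauchy--Schwarz then bounds its contribution by $\eta\alpha\|g_{\mathrm{SC}}^v\|_2\,\eta L_\ell\|g_c\|_2$.
\item The third term is bounded by Cauchy--Schwarz as $\eta(1-\alpha)\|g_{\mathrm{SC}}^v\|_2\|g_u\|_2$.
\end{itemize}
Factoring out $-\eta\|g_{\mathrm{SC}}^v\|_2$ gives exactly the bracket in Eq.~(\ref{eq:blended-risk-descent-condition}). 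The local-descent conclusion follows because a negative directional derivative implies $\Psi(g_c+\beta\bar g)<\Psi(g_c)$ for all sufficiently small $\beta>0$.

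The only delicate step is the mass bound $\|w^{\mathrm{SC}}\|_1\le1$. It is what keeps the perturbation in the second bullet from scaling with $m$, and it has already been established in Appendix~\ref{app:transport-proofs}. Everything else is routine.
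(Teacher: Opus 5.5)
Your proof is correct and follows the paper's argument essentially step for step. The finite-step bound comes from the descent lemma together with $\nabla_g\Psi(g_c)=-\eta g_{\mathrm{SC}}^v$. For the blended direction, $w^{\mathrm{SC}}\ge0$ with $\|w^{\mathrm{SC}}\|_1\le1$ and gradient Lipschitzness give $\|g_{\mathrm{SC}}^0-g_{\mathrm{SC}}^v\|_2\le\eta L_\ell\|g_c\|_2$, and Cauchy--Schwarz handles the $g_u$ term; your three-way split of $\bar g$ just unpacks the paper's lower bound on $\langle g_{\mathrm{SC}}^v,g_{\mathrm{SC}}^0\rangle$.
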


\begin{proof}
The descent lemma and Theorem~\ref{thm:risk-descent} give
\begin{equation}
\begin{aligned}
\Psi(g_c+\beta g_{\mathrm{SC}}^{v})
&\leq
\Psi(g_c)
+\beta\langle\nabla_g\Psi(g_c),g_{\mathrm{SC}}^{v}\rangle
+\frac{L_\Psi\beta^2}{2}\|g_{\mathrm{SC}}^{v}\|_2^2\\
&=
\Psi(g_c)
-
\left(\beta\eta-\frac{L_\Psi\beta^2}{2}\right)
\|g_{\mathrm{SC}}^{v}\|_2^2,
\end{aligned}
\end{equation}
proving Eq.~(\ref{eq:finite-step-risk-descent}).

For the current-point gradient, Lipschitz continuity and the mass bound $\|w^{\mathrm{SC}}\|_1\leq1$ imply
\begin{equation}
\begin{aligned}
\|g_{\mathrm{SC}}^{0}-g_{\mathrm{SC}}^{v}\|_2
&\leq
\sum_iw_i^{\mathrm{SC}}
\|\nabla_\theta\ell(z_i;\theta_t)
-\nabla_\theta\ell(z_i;\theta_t-\eta g_c)\|_2\\
&\leq
\eta L_\ell\|g_c\|_2\sum_iw_i^{\mathrm{SC}}\\
&\leq
\eta L_\ell\|g_c\|_2.
\end{aligned}
\end{equation}
Therefore,
\begin{equation}
\langle g_{\mathrm{SC}}^{v},g_{\mathrm{SC}}^{0}\rangle
\geq
\|g_{\mathrm{SC}}^{v}\|_2^2
-
\eta L_\ell\|g_c\|_2\|g_{\mathrm{SC}}^{v}\|_2,
\end{equation}
and Cauchy--Schwarz gives
\begin{equation}
\langle g_{\mathrm{SC}}^{v},g_u\rangle
\geq
-\|g_{\mathrm{SC}}^{v}\|_2\|g_u\|_2.
\end{equation}
Since $\nabla_g\Psi(g_c)=-\eta g_{\mathrm{SC}}^{v}$ and $\bar g=(1-\alpha)g_u+\alpha g_{\mathrm{SC}}^{0}$, combining the last two inequalities proves Eq.~(\ref{eq:blended-risk-descent-condition}).
\end{proof}

\section{Experimental details and additional results}
\label{app:experiments}

This appendix collects the implementation and run details for the three
experimental domains, the additional vision results, and the definitions
of the component controls.

\subsection{Replay implementation}
\label{app:replay-implementation}

Let $m$ denote the number of scored memories, which can be smaller than
memory capacity $K$. Scoring and aggregation follow
Sections~\ref{subsec:geometry-estimation} and~\ref{subsec:transport-allocation}.
Writing $u_i=1/m$, the normalized sampling probabilities are
\begin{equation}
  p_i=\frac{\bar w_i}{s_w},\qquad
  \bar w_i=(1-\alpha)u_i+\alpha w_i^{\mathrm{SC}},\qquad
  s_w=\sum_i\bar w_i.
  \label{eq:experimental-sampling}
\end{equation}
The full configurations use $\alpha=0.5$. Their replay estimators differ
from Algorithm~\ref{alg:core-controller} as specified below.

\paragraph{Sampling and replay strength.}
Vision ER and ER-ACE draw 32 memories without replacement according to $p$ and multiply their mean replay loss by $s_w$, with replay strength $\lambda=1$. X-DER changes its label-replay and logit-replay draws but retains its own loss coefficients. Instruction tuning normalizes the weights to unit mass and averages selected losses without replacement. Code RL draws eight groups with replacement and also uses a unit-mass mean. Both use $\lambda=1$, without the $s_w$ multiplier. The without-replacement estimators do not use inclusion-probability corrections and are not generally unbiased for Eq. (\ref{eq:geometry-weighted-replay}), even with the $s_w$ multiplier (Appendix~\ref{app:core-implementation}).

\subsection{Continual language-model post-training}
\label{app:instruction-setup}

\paragraph{Model, stream, and memory.}
We use the 15 instruction-formatted classification tasks in Section~\ref{sec:lnt}, with verbalized labels and exact-match evaluation, without synthetic label corruption. T5-Large uses rank-8 LoRA and a single pass of 860 updates with 16 current examples per update. Table~\ref{tab:instruction-config} specifies the scoring and replay configuration. At each pool refresh, sampling without replacement selects $m=\min(512,|\mathcal M|)$ memories. Hence the candidate pool contains the entire available memory when $K=300$; 512 is an upper bound, not a requirement to duplicate examples.

\begin{table}[!htb]
\centering\small
\caption{Instruction-tuning configuration. Distances and allocation scales
are computed on the scored candidate pool.}
\label{tab:instruction-config}
\setlength{\tabcolsep}{4pt}
\renewcommand{\arraystretch}{1.10}
\begin{tabularx}{\linewidth}{@{}p{0.25\linewidth}X@{}}
\toprule
Quantity & Setting \\
\midrule
Representation & Current T5-Large with its LoRA adapters; last decoder
layer, mean-pooled over target-label tokens (1024 dimensions), then
$\ell_2$-normalized. Refreshed with the candidate pool. \\
Distance & $C_{ij}=\|h_i-h_j\|_2/\nu$, where
$\nu=\mathrm{median}_{k<l}\|h_k-h_l\|_2$ within the pool. \\
Allocation & $\epsilon_s=0.2$, $\epsilon_T=0.5$;
$\tau=c_\beta\,\mathrm{std}([S\delta]_+)/\epsilon_T$ with $c_\beta=1$;
$\alpha=0.5$. \\
Optimization & AdamW learning rate $10^{-3}$ for training;
one virtual SGD step on LoRA parameters with $\eta_v=10^{-2}$. \\
Memory and scoring & $K\in\{300,3000\}$; up to 512 distinct candidates,
refreshed every four replay events. \\
Replay & Four examples per update, without replacement; unit-mass
mean replay loss, $\lambda=1$. \\
\bottomrule
\end{tabularx}
\end{table}

\paragraph{Task orders and paired runs.}
Table~\ref{tab:instruction-orders} gives the three evaluated orders.
A run is an order--seed pair. 
\begin{table}[!htb]
\centering\small
\caption{Task orders for instruction-tuning evaluation.}
\label{tab:instruction-orders}
\setlength{\tabcolsep}{4pt}
\renewcommand{\arraystretch}{1.10}
\begin{tabularx}{\linewidth}{@{}lX@{}}
\toprule
Order & Task sequence \\
\midrule
0 & mnli, cb, wic, copa, qqp, boolq, rte, imdb, yelp, amazon, sst2,
dbpedia, agnews, multirc, yahoo \\
1 & multirc, boolq, wic, mnli, cb, copa, qqp, rte, imdb, sst2, dbpedia,
agnews, yelp, amazon, yahoo \\
2 & yelp, amazon, mnli, cb, copa, qqp, rte, imdb, sst2, dbpedia, agnews, yahoo, multirc, boolq, wic \\
\bottomrule
\end{tabularx}
\end{table}

\subsection{Noisy-label vision}
\label{app:vision-setup}

\subsubsection{Training protocol and host integrations}
\label{app:vision-training}

Split MNIST and Split CIFAR-10 each contain five two-class tasks with
memory capacity 500. Split CIFAR-100 contains ten ten-class tasks with
capacity 2,000; Split TinyImageNet contains ten twenty-class tasks with
capacity 4,000. These are total capacities, not per-class allocations.
Symmetric corruption replaces a selected training label uniformly with
a different class label. Evaluation uses clean labels and, for Class-IL,
no task identifier. Noise rates are 0, 20, 40, and 60\%; MNIST also
includes 80\%.

The CIFAR experiments use one stream pass, ResNet-18 trained from
scratch with SGD, current and replay batches of 32, random crops, and
horizontal flips. The vision runs use an Intel Core i9-13900HX CPU
(24 cores) and an NVIDIA A100 GPU (40 GB).

\paragraph{Host policies and optimization budgets.}
ER and ER-ACE use reservoir memories. ER-ACE retains its asymmetric
stream cross-entropy and cosine classifier. MIR-prop samples in
proportion to positive interference, using candidate pools of 96 on
CIFAR-10 and 160 on CIFAR-100. X-DER retains task-end memory insertion,
logit updates and transplanting, and its auxiliary contrastive draw;
SPHERE replaces its label-replay and logit-distillation draws.

ER-type CIFAR methods use learning rate 0.1; DER++ and X-DER use 0.03.
Their respective distillation/label-loss coefficient pairs are
$(0.5,0.5)$ and $(0.1,0.5)$ on CIFAR-10, and $(0.3,1.0)$ and
$(0.1,0.8)$ on CIFAR-100. GDumb fits its memory for 256 epochs.
The CIFAR-10 PuriDivER configuration uses 255 epochs of MixMatch
memory fitting with CutMix and learning rate 0.03. The evaluated
ABS-based configuration combines ER-ACE, ABS sample selection, and
the same fitting stage. These methods are grouped separately because
additional fitting changes the optimization budget.

\subsubsection{Representation geometry and parameter selection}
\label{app:vision-geometry}

The image representation is the current ResNet-18's globally averaged
last residual-stage output (512 dimensions). MNIST instead uses the
current MLP's last hidden layer (100 dimensions). At each rescoring,
features are recomputed in evaluation mode on stored, unaugmented
inputs and $\ell_2$-normalized. Distances are
$C_{ij}=\|h_i-h_j\|_2^2$. The entire available memory is scored at
the stated interval and at the first step of each task.

Let $r_i(k)$ denote the distance from memory $i$ to its $k$-th
nearest neighbor. The bandwidths and transport temperature are
\begin{equation}
\begin{aligned}
 \epsilon_s &= \mathrm{median}_i\,r_i(k_s),\\
 \epsilon_T &= \max\{\mathrm{median}_i\,r_i(k_T),\epsilon_s\},\\
 T &= T_{\mathrm{frac}}\,
       \mathrm{EMA}_{0.9}\!\left(P_{95}([\delta]_+)\right),
 \qquad \tau=T/\epsilon_T.
\end{aligned}
\label{eq:vision-allocation-scales}
\end{equation}
The exponential moving average is updated at rescorings, and $P_{95}$
is the 95th percentile. Thus $T=\tau\epsilon_T$ is the entropy scale
in Eq.~\ref{eq:soft-transport-allocation}.
Table~\ref{tab:vision-hyperparameters} gives the host-specific settings.
Each virtual update is one SGD step on all parameters at the host
learning rate; parameters and batch-normalization statistics are
restored afterward.

\begin{table}[!htb]
\centering\small
\caption{Vision allocation settings. The last two columns give the
rescoring interval in stream steps and the virtual-step size.
$\alpha=0.5$; ER and ER-ACE use $\lambda=1$.}
\label{tab:vision-hyperparameters}
\setlength{\tabcolsep}{4pt}
\begin{tabular*}{\linewidth}{@{\extracolsep{\fill}}llrrrrr@{}}
\toprule
Dataset & Host & $k_s$ & $k_T$ & $T_{\mathrm{frac}}$ & Interval & $\eta$ \\
\midrule
CIFAR-10, MNIST & ER & 5 & 25 & 0.5 & 4 & 0.1 \\
CIFAR-10 & ER-ACE & 10 & 50 & 1.0 & 4 & 0.1 \\
CIFAR-10 & X-DER & 5 & 50 & 0.5 & 8 & 0.03 \\
CIFAR-100 & ER, ER-ACE & 5 & 25 & 0.5 & 4 & 0.1 \\
CIFAR-100 & X-DER & 5 & 25 & 0.5 & 4 & 0.03 \\
TinyImageNet & ER & 5 & 25 & 0.5 & 4 & 0.1 \\
TinyImageNet & ER-ACE & 5 & 50 & 0.25 & 8 & 0.1 \\
TinyImageNet & X-DER & 5 & 25 & 0.5 & 4 & 0.03 \\
MNIST & ER-ACE & 5 & 25 & 0.5 & 4 & 0.1 \\
\bottomrule
\end{tabular*}
\end{table}

\paragraph{Development search.}
The search used $k_s\in\{5,10\}$, $k_T\in\{25,50\}$,
$T_{\mathrm{frac}}\in\{0.25,0.5,1.0\}$,
rescoring interval in $\{2,4,8\}$, and
$\alpha\in\{0,0.5,0.7,0.85\}$. Selection maximized the mean
improvement over the corresponding $\alpha=0$ control on development
runs. One configuration per dataset and host was fixed before
evaluation. CIFAR-100 ER transfers the CIFAR-10 configuration;
CIFAR-100 ER-ACE selects among twelve transferred configurations.
Baseline searches used the same development seeds: learning rates
$\{0.03,0.1\}$ for ER, ER-ACE, DER++, and X-DER; MIR-prop candidate
sizes $\{96,160,320\}$; and distillation/label-loss coefficients
$\{0.1,0.2,0.3,0.5\}\times\{0.5,0.8,1.0\}$ for DER++ and X-DER.
These host loss coefficients are distinct from SPHERE's mixing
coefficient $\alpha$.

\paragraph{Repeated runs and pairing.}
Host/+SPHERE comparisons match class orders, corruption realizations,
stream orders, and seeds; reservoir-based hosts also match reservoir
decisions. Matching seeds does not require identical model-dependent
memory contents or stored logits for every host. The vision comparison
tables use population standard deviations across evaluation seeds.
The clean neighborhood-block ablation in
Table~\ref{tab:cifar-neighborhood-ablation} instead uses ten paired
runs and sample standard deviations. These evaluation
seeds are separate from the development seeds.

\subsubsection{Additional accuracy and retention results}
\label{app:vision-results}

Tables~\ref{tab:mnist-main} and~\ref{tab:c10-main} supplement
Table~\ref{tab:cifar-main} with final Class-IL accuracy on MNIST and
CIFAR-10. Improvements depend on the host and noise level: the
CIFAR-10 integrations improve ER, ER-ACE, and X-DER at 40\% and 60\%
noise, whereas X-DER's mean decreases at 20\%.

\begin{table}[!htb]
\centering\small
\caption{Split MNIST Class-IL accuracy (\%, $K=500$; mean$\pm$SD).}
\label{tab:mnist-main}
\setlength{\tabcolsep}{4pt}
\begin{tabular*}{\linewidth}{@{\extracolsep{\fill}}lccccc@{}}
\toprule
Method & Clean & 20\% & 40\% & 60\% & 80\% \\
\midrule
\multicolumn{6}{l}{\emph{Stream-time replay}} \\
ER & 87.3$\pm$0.8 & 65.3$\pm$4.0 & 46.9$\pm$5.4 & 31.3$\pm$5.8 & 19.2$\pm$2.9 \\
ER + SPHERE & 86.3$\pm$1.1 & 70.8$\pm$6.7 & 54.1$\pm$4.9 & 31.5$\pm$5.5 & 19.7$\pm$3.1 \\
MIR & 88.2$\pm$0.6 & 61.6$\pm$6.6 & 45.7$\pm$5.0 & 31.5$\pm$5.5 & 20.6$\pm$2.5 \\
DER++ & \textbf{90.7$\pm$0.6} & 77.5$\pm$6.0 & 62.1$\pm$4.6 & 38.0$\pm$3.5 & 20.7$\pm$3.0 \\
ER-ACE & 88.8$\pm$1.1 & 75.7$\pm$9.4 & 66.2$\pm$2.8 & 43.7$\pm$2.9 & \textbf{21.3$\pm$2.8} \\
ER-ACE + SPHERE & 89.2$\pm$0.4 & \textbf{81.6$\pm$3.8} & \textbf{69.8$\pm$3.2} & \textbf{45.8$\pm$4.5} & \textbf{21.3$\pm$2.1} \\
\bottomrule
\end{tabular*}
\end{table}

\begin{table}[!htb]
\centering\small
\caption{Final Class-IL accuracy on Split CIFAR-10 (\%, memory 500).
Values are mean$\pm$SD. Best means are bold within each protocol.}
\label{tab:c10-main}
\setlength{\tabcolsep}{3.5pt}
\begin{tabular*}{\linewidth}{@{\extracolsep{\fill}}lcccc@{}}
\toprule
Method & Clean & 20\% & 40\% & 60\% \\
\midrule
\multicolumn{5}{l}{\emph{Stream-time replay}} \\
ER & 43.4$\pm$5.7 & 33.4$\pm$4.5 & 25.1$\pm$2.9 & 17.4$\pm$2.7\\
ER + SPHERE & 47.6$\pm$3.8 & 35.8$\pm$2.7 & 27.6$\pm$1.6 & 19.9$\pm$1.8\\
MIR-prop & 44.6$\pm$5.2 & 32.1$\pm$4.4 & 24.7$\pm$2.1 & 18.1$\pm$1.2\\
DER++ & 43.7$\pm$5.1 & 28.2$\pm$4.3 & 23.3$\pm$4.8 & 18.2$\pm$3.3\\
ER-ACE & 39.8$\pm$4.5 & 36.1$\pm$4.5 & 27.9$\pm$3.4 & 20.3$\pm$2.3\\
ER-ACE + SPHERE & 41.7$\pm$5.8 & \textbf{38.8$\pm$4.2} & \textbf{31.3$\pm$2.4} & \textbf{22.6$\pm$1.8}\\
X-DER & 45.4$\pm$2.9 & 28.1$\pm$8.2 & 25.1$\pm$6.4 & 17.2$\pm$3.9\\
X-DER + SPHERE & \textbf{48.6}$\pm$4.4 & 27.5$\pm$8.9 & 26.6$\pm$6.9 & 18.4$\pm$3.7\\
\midrule
\multicolumn{5}{l}{\emph{Separate buffer training}} \\
GDumb & 46.6$\pm$1.3 & 36.8$\pm$0.9 & 27.2$\pm$0.9 & 19.8$\pm$1.1\\
PuriDivER & 62.2$\pm$0.9 & 60.6$\pm$1.6 & 57.4$\pm$1.3 & 45.1$\pm$1.8\\
PuriDivER + SPHERE & \textbf{65.5$\pm$1.7} & \textbf{62.0$\pm$1.5} & \textbf{57.9$\pm$2.4} & \textbf{51.3$\pm$4.3}\\
ABS-based & 48.5$\pm$2.9 & 47.6$\pm$2.5 & 42.4$\pm$2.3 & 29.1$\pm$2.1\\
\bottomrule
\end{tabular*}
\end{table}

Table~\ref{tab:cifar-retention-extra} combines the 40\% and 60\%
retention results; Table~\ref{tab:vision-retention-main} gives 20\%.
Class-IL and Task-IL use final accuracy without and with task identity.
AAA averages incremental Class-IL accuracy. Average forgetting is
computed per class; worst-class accuracy is the minimum final class
accuracy, and $\mathrm{CVaR}_{0.2}$ averages forgetting over the worst
20\% of classes. ER-ACE + SPHERE improves all six reported means over
ER-ACE at both additional noise levels, including the worst-class and
tail metrics.

\begin{table}[!htb]
\centering\footnotesize
\caption{CIFAR-10 retention (\%, $K=500$; mean$\pm$SD).
Bold compares ER-ACE with its SPHERE integration.}
\label{tab:cifar-retention-extra}
\setlength{\tabcolsep}{3pt}
\renewcommand{\arraystretch}{1.10}
\begin{tabular*}{\linewidth}{@{\extracolsep{\fill}}lcccccc@{}}
\toprule
Method & Class-IL$\uparrow$ & Task-IL$\uparrow$ & AAA$\uparrow$ & avgF$\downarrow$ & Worst-class$\uparrow$ & CVaR$_{0.2}\downarrow$ \\
\midrule
\multicolumn{7}{l}{\emph{40\% symmetric noise}} \\
ER & 25.1$\pm$2.8 & 71.6$\pm$3.4 & 45.3$\pm$3.6 & 49.7$\pm$5.1 & 2.6$\pm$1.5 & 81.1$\pm$3.5 \\
ER-ACE & 27.9$\pm$3.4 & 72.7$\pm$4.1 & 47.0$\pm$3.3 & 28.4$\pm$4.5 & 8.3$\pm$6.0 & 59.4$\pm$8.6 \\
ER-ACE + SPHERE & \textbf{31.3$\pm$2.4} & \textbf{74.3$\pm$5.4} & \textbf{50.2$\pm$3.0} & \textbf{18.4$\pm$6.4} & \textbf{16.4$\pm$3.6} & \textbf{52.2$\pm$8.0} \\
DER++ & 23.3$\pm$4.8 & 76.8$\pm$4.7 & 39.7$\pm$4.9 & 60.6$\pm$9.5 & 2.0$\pm$4.6 & 86.9$\pm$7.5 \\
MIR-prop & 24.7$\pm$2.1 & 69.3$\pm$3.8 & 38.7$\pm$2.4 & 37.9$\pm$5.2 & 2.5$\pm$2.5 & 72.8$\pm$5.3 \\
X-DER & 25.1$\pm$6.4 & 71.7$\pm$8.8 & 38.4$\pm$7.8 & 24.3$\pm$8.0 & 0.0$\pm$0.0 & 59.1$\pm$14.8 \\
\addlinespace[2pt]
GDumb (buffer fitting) & 27.2$\pm$0.8 & 69.0$\pm$2.7 & 26.2$\pm$0.7 & 25.0$\pm$2.8 & 7.3$\pm$3.8 & 59.6$\pm$16.3 \\
\midrule
\multicolumn{7}{l}{\emph{60\% symmetric noise}} \\
ER & 17.4$\pm$2.7 & 62.4$\pm$3.0 & 36.4$\pm$3.6 & 54.8$\pm$4.5 & 0.8$\pm$1.2 & 84.5$\pm$6.9 \\
ER-ACE & 20.3$\pm$2.2 & 64.8$\pm$4.1 & 39.3$\pm$2.8 & 27.0$\pm$5.3 & 5.1$\pm$3.2 & 58.2$\pm$7.6 \\
ER-ACE + SPHERE & \textbf{22.6$\pm$1.8} & \textbf{67.4$\pm$2.7} & \textbf{41.9$\pm$2.9} & \textbf{21.0$\pm$6.5} & \textbf{11.2$\pm$1.6} & \textbf{52.0$\pm$9.9} \\
DER++ & 18.2$\pm$3.3 & 68.8$\pm$3.4 & 34.2$\pm$3.3 & 62.2$\pm$7.3 & 0.3$\pm$0.4 & 90.1$\pm$4.9 \\
MIR-prop & 18.1$\pm$1.2 & 63.8$\pm$2.1 & 32.2$\pm$2.5 & 33.0$\pm$4.1 & 2.0$\pm$1.1 & 73.9$\pm$4.6 \\
X-DER & 17.2$\pm$3.9 & 63.8$\pm$5.8 & 33.7$\pm$5.0 & 25.5$\pm$4.9 & 0.0$\pm$0.0 & 64.8$\pm$8.7 \\
\addlinespace[2pt]
GDumb (buffer fitting) & 19.8$\pm$1.1 & 62.9$\pm$2.0 & 24.7$\pm$0.8 & 22.1$\pm$2.8 & 3.8$\pm$2.8 & 67.0$\pm$18.3 \\
\bottomrule
\end{tabular*}
\end{table}

\subsection{Code-generation reinforcement learning}
\label{app:code-setup}

\subsubsection{Data and optimization}

The four KodCode-V1 stages are Package, Data Structure/\allowbreak Algorithm,
Leetcode, and Codeforces/\allowbreak Apps. Problems use only the standard library
and have 4--12 collectible pytest nodes.
Reference solutions must pass all nodes in two executions; unstable
nodes are excluded. Near duplicates are removed using centered prompt
embeddings with cosine similarity above 0.90. Each stage has 120
training, 100 evaluation, 40 development-fitting, and 40
development-probing problems, with evaluation disjoint from both
training and development. The visible test is the first node after
sorting by the SHA256 hash of problem and node identifiers. Full test
suites are unavailable to training and replay selection and are used
only for offline audits and evaluation.

Qwen2.5-Coder-1.5B-Instruct uses rank-16 LoRA on all projection layers,
with LoRA scaling parameter 32 (not SPHERE's $\alpha$). AdamW uses
learning rate $10^{-4}$. Eight current problems and two passes over
each stage give 30 updates per stage and 120 overall. The protocol permits
repeated access within a stage. Eight programs are generated per problem by
vLLM using the current LoRA version. Prompts and responses are capped
at 1,536 and 512 tokens. Programs run in an isolated environment with
CPU, memory, and wall-clock limits and pytest plugins disabled.

The group advantages are centered without division by reward standard
deviation. The clipped token-level surrogate uses a fixed denominator
of 512 and sampler correction
$\min\{2,\exp(\log p_0-\log q)\}$, where $p_0$ and $q$ are the
trainer and sampler probabilities. A virtual AdamW step on current
problems, at learning rate $10^{-4}$, scores candidate groups with the
same sampled programs and advantages before and after the step.
Parameters and optimizer state are then restored.

\subsubsection{Geometry, candidates, and development selection}

Prompt embeddings come from the frozen base model's last hidden layer,
mean-pooled over prompt tokens (1,536 dimensions). They are
$\ell_2$-normalized, centered by subtracting the mean over all training
problems, and normalized again. Let $d_0=0.524$ be the median
cosine distance to the eighth nearest neighbor over the training
pool. The geometry and allocation settings are $C_{ij}=\frac{1-h_i^\top h_j}{d_0}$, $\epsilon_s=\frac{1}{\ln 10}$, $\epsilon_T=1$, $\tau=0.25\,P_{95}([S\delta]_+)$ and $\alpha=0.5$.

Embeddings and their normalization are computed before training and
held fixed. This preprocessing uses training prompts from all four
stages, but no evaluation problems or full-suite outcomes.

Reservoir admission follows a pregenerated schedule shared within each
seed. Capacity is 480, equal to the number of distinct training problems;
scoring and replay budgets, rather than this capacity, restrict access.
Candidate selection favors problems whose latest replay group had both
passing and failing programs. Each update generates eight programs
for each of 48 candidates, and only mixed-outcome groups enter the
scored set. SPHERE replays eight groups with replacement and unit
total weight. All replay methods share this candidate-generation
budget. Common random seeds do not force identical candidates or
programs after model-dependent histories diverge. No-replay training
generates no candidates.

\subsubsection{Evaluation metrics}
\label{app:code-metrics}

Greedy evaluation requires passing the full test suite.
Let $A_{t,j}$ be full-suite accuracy in percent on stage $j$ after
training stage $t$, and $A_{0,j}$ its initial accuracy. For four stages,
\begin{equation}
 \operatorname{Final}=\frac14\sum_{j=1}^4 A_{4,j},\qquad
 \operatorname{Old}=\frac13\sum_{j=1}^3 A_{4,j},\qquad
 \operatorname{Worst}=\min_{1\leq j\leq4}A_{4,j}.
 \label{eq:code-evaluation-metrics}
\end{equation}
\begin{equation}
\begin{aligned}
 F&=\frac13\sum_{j=1}^3
       \left(\max_{j\leq t\leq4}A_{t,j}-A_{4,j}\right),\\
 \operatorname{BWT}&=\frac13\sum_{j=1}^3(A_{4,j}-A_{j,j}).
\end{aligned}
\label{eq:code-retention-metrics}
\end{equation}
Forgetting and BWT are measured in percentage points;
lower forgetting and higher BWT are better. Checkpoint-average
accuracy, when used, is $(1/16)\sum_{t=1}^4\sum_{j=1}^4 A_{t,j}$.
It includes future stages at earlier checkpoints and is not a
seen-task-only incremental average. 

\subsection{Component ablations and interpretation}
\label{app:ablations}

\begin{table}[h]
\centering\small
\caption{Instruction-tuning controls in
Tables~\ref{tab:sphere-neighborhood-ablation} and~\ref{tab:lnt-ablation}.
The two transport-free controls use different allocation rules.}
\label{tab:ablation-definitions}
\setlength{\tabcolsep}{4pt}
\renewcommand{\arraystretch}{1.12}
\begin{tabularx}{\linewidth}{@{}p{0.25\linewidth}X@{}}
\toprule
Control & Operation \\
\midrule
$S=I$ & Use the identity for aggregation and omit its pullback;
retain clipping, transport with $C$, and the uniform floor. \\
Consensus only &
Replace $q^\ast(v)$ by $\operatorname{softmax}_T(v)$,
with $T$ chosen to match the ESS of the full method's weights.
Set$w^{\mathrm{SC}}
 = S^\top(\operatorname{softmax}_T(v)\odot\chi)$,
and retain the uniform mixture and normalization in Eq. (\ref{eq:experimental-sampling}).
\\
No transport & Normalize the supported field directly,
$\mathcal N(v)$, without transport or the $S^\top$ pullback. \\
Permuted geometry & At every pool refresh, randomly permute feature
rows before constructing $C$ and $S$, breaking their alignment with
the memories while retaining the full allocation procedure. \\
Matched MIR & Use $\operatorname{softmax}_T([\delta]_+)$ with
$T$ chosen to match the ESS of the full method's weights. \\
No uniform floor & Set $\alpha=1$; remove only the uniform mixture
from the full allocation rule. \\
\bottomrule
\end{tabularx}
\end{table}

The controls test different alternatives to the full allocation rule.
Names such as ``No transport'' do not denote the same operation across
domains. Below, $v=[S\delta]_+$ is the supported field,
$\chi=\mathbf1_{\{S\delta>0\}}$, and $u_i=1/m$.
For a nonnegative vector of positive mass, write
$\mathcal N(a)=a/\sum_i a_i$, and define
$\operatorname{softmax}_T(a)_i=\exp(a_i/T)/\sum_j\exp(a_j/T)$
for $T>0$. The full method uses
$w^{\mathrm{SC}}=S^\top(q^*(v)\odot\chi)$ and
Eq. (\ref{eq:experimental-sampling}).

Table~\ref{tab:ablation-definitions} gives the specified replacement
in each control. All retain the full method's memory capacity,
candidate-pool budget, replay-batch budget, and virtual step.
The two controls called ``Consensus only'' and ``No transport'' differ
in their allocation rule: the former uses an ESS-matched softmax,
whereas the latter directly normalizes the supported scores and omits
the pullback. Their results are therefore not two evaluations of an
identical transport-free method.

\paragraph{Scope of component attribution.}
The $S=I$ control evaluates signed aggregation and its
chain-rule pullback jointly, while retaining the transport
distance cost. It measures the contribution of the neighborhood
construction as a whole, rather than assigning separate gains
to aggregation and $S^\top$.
Consensus only instead compares geometric transport with an
ESS-matched softmax of the same supported scores, retaining
the downstream weighting operations.
These comparisons evaluate specific alternatives to the full
method; their differences are not additive shares of the
overall improvement.
Instruction tuning and code-generation RL both use unit-mass
replay without the $s_w$ multiplier, so their gains do not
depend on that explicit scaling factor.
The no-uniform-floor control tests baseline uniform rehearsal,
not the isolated effect of $s_w$.

\end{document}